\documentclass[12pt]{class2020} % Include author names

% The following packages will be automatically loaded:
% amsmath, amssymb, natbib, graphicx, url, algorithm2e

\usepackage{tcolorbox}
\usepackage{cases}
\usepackage{mathtools}
\usepackage{bbold}
\usepackage{caption}
\usepackage{graphicx,subcaption}

\captionsetup{subrefformat=parens}

\DeclarePairedDelimiter\floor{\lfloor}{\rfloor}
\newtheorem{claim}[theorem]{Claim}

\title[New Potential-Based Bounds for Prediction with Expert Advice]{New Potential-Based Bounds for Prediction with Expert Advice}
\usepackage{times}
% Use \Name{Author Name} to specify the name.
% If the surname contains spaces, enclose the surname
% in braces, e.g. \Name{John {Smith Jones}} similarly
% if the name has a "von" part, e.g \Name{Jane {de Winter}}.
% If the first letter in the forenames is a diacritic
% enclose the diacritic in braces, e.g. \Name{{\'E}louise Smith}

% Two authors with the same address
% \coltauthor{\Name{Author Name1} \Email{abc@sample.com}\and
%  \Name{Author Name2} \Email{xyz@sample.com}\\
%  \addr Address}

% Three or more authors with the same address:
% \coltauthor{\Name{Author Name1} \Email{an1@sample.com}\\
%  \Name{Author Name2} \Email{an2@sample.com}\\
%  \Name{Author Name3} \Email{an3@sample.com}\\
%  \addr Address}

% Authors with different addresses:
\coltauthor{%
 \Name{Vladimir A.\ Kobzar} \Email{vladimir.kobzar@nyu.edu}\\
 \addr Center for Data Science, New York University, 60 Fifth Ave., New York, New York
 \AND
 \Name{Robert V.\ Kohn} \Email{kohn@cims.nyu.edu}\and
  \Name{Zhilei Wang} \Email{zhilei@cims.nyu.edu}\\
  \addr Courant Institute of Mathematical Sciences, New York University, 251 Mercer St., New York, New York %
}

\begin{document}

\maketitle

\begin{abstract}%
This work addresses the classic machine learning problem of online prediction with expert advice. We consider the finite-horizon version of this zero-sum, two-person game. 
Using verification  arguments  from optimal control theory, we view the task of finding  better lower and upper bounds on the value of the game (regret) as the problem of finding better sub- and supersolutions of certain partial differential equations (PDEs).
These sub- and supersolutions serve as the potentials for player and adversary strategies, which lead to the corresponding bounds. To get explicit bounds, we use closed-form solutions of specific PDEs. Our bounds hold for any given number of experts and horizon; in certain regimes (which we identify) they improve upon the previous state of the art. For two and three experts, our bounds provide the optimal leading order term.
\end{abstract}

%\begin{keywords}%
% Online learning,  expert advice framework, regret minimization, fixed horizon,  upper and lower bounds, potential-based strategies, subsolutions and supersolutions of partial differential equations, optimal control, dynamic programming, verification argument, linear heat equation, comb adversary%
%\end{keywords}

\section{Introduction}
\label{sec:intro} 
The classic machine learning problem of online prediction with expert advice (the \textit{expert problem}) is a repeated two-person zero-sum game with the following structure. At each round,  the predictor (\textit{player}) uses guidance from a collection of \textit {experts} with the goal of minimizing the difference (\textit {regret}) between the player's loss and that of the best performing expert in hindsight. The environment (\textit{adversary}) determines the losses of each expert for that round. The player's selection of the experts and the adversary's choice of the loss for each expert are revealed to both parties, and this prediction process is repeated until the final round. 

We will focus on the following representative definition of this problem, which mirrors (up to translation and rescaling of the loss) the version considered in recent work on optimal strategies~\citep{gravin16, abbasi}.   
 \begin{tcolorbox} 
  \textit{Prediction with expert advice:} 
  At each period $t \in [T]$, (a) the \textit {player} determines which of the $N$ \textit{experts} to follow by selecting a discrete probability distribution $p_t \in \Delta_N$; (b) the \textit{adversary} allocates losses to the experts by selecting a probability distribution $a_t$ over the hypercube $ [-1,1]^N$; and (c) the expert losses $q_t \in [-1,1]^N $ and the player's choice of the expert $I_t \in [N]$ are sampled from $a_t$ and $p_t$, respectively, and revealed to both parties.
 \end{tcolorbox}
In this setting $a =(a_t)_{t\in [T]} $ and $p=(p_t)_{t\in [T]}$ refer to, respectively, the  \textit{adversary} and \textit{player strategies} or simply the adversary and player. The player strategy may be known to the adversary, and vice versa. In general, each strategy at time $t$ can depend on the history of losses and choices of the expert in previous periods. However, the flow of information above implies that, conditioned on the history,  $q_t$ and $I_t$ are independent.   We consider the \textit {finite horizon} version of the problem, where the number of periods $T$ is fixed and the regret is  $R_T(p,a) =  \mathbb E_{ p,a} \left [ \sum_{t \in [T]}  q_{I_t,t}  - \min_{ i } \sum_{t\in [T]} q_{i,t} \right]$.
 
Numerous strategies attain vanishing per round regret.  For example, the \textit {exponentially weighted forecaster} $p^{e}$ provides the upper bound $ \max_a R_T(p^{e},a) \leq  \sqrt {2 T \log N}$.  Also for all $\epsilon >0$, there exist $N$ and $T$ sufficiently large, such that the \textit {randomized adversary} $a^r$ (which assigns $1$ or $-1$ to each component of $q$ independently with equal probability) approaches that bound: $(1-\epsilon)  \sqrt {2 T \log N} \leq   \min_p R_T (p,a^r)$.\footnote{See \citet{cesa-bianchi97} and Theorems 2.2 and 3.7 in \citet{cesa-bianchi06}. These results are rescaled here to apply to $[-1,1]^N$, instead of $[0,1]^N$, losses.}  
  
A minmax optimal player (\textit{optimal player})  is a player that minimizes the regret over all possible adversaries and a minmax optimal adversary  (\textit{optimal adversary}) is an adversary  that maximizes the regret over all possible players. Thus, $p^e$ and $a^r$ are optimal asymptotically in $T$ and $N$. 

Nonasymptotic optimal strategies have been determined explicitly using random walk methods for $N=2$, and, up to the leading order term, for $N=3$~\citep{cover, gravin16, abbasi}. For general $N$, optimal strategies can be found using dynamic programming and depend only on the cumulative losses of each expert and the remaining time, rather than the full history of adversary's and/or player's choices \citep {cesa-bianchi97}. \citet{luo} determined optimal strategies in the version of the problem where the adversary's choice of losses is restricted to the set of standard basis vectors. However, optimal strategies for the original game have not been determined explicitly.

In a related line of work, strategies that are optimal asymptotically in $T$ have been determined by PDE-based methods.  For $N=2$, \citet{Zhu} established that the value function is given by the solution of a 1D linear heat equation, which provides a continuous perspective on the random walk characterization of the non-asymptotic problem.  \citet{drenska2019prediction} showed that for any $N$, the value function, in a scaling limit, is the unique solution of an associated nonlinear PDE.  \citet{bayraktar2019b} found closed-form solutions of the PDEs for $N = 3$ and $4$.

Due to the complexity of determining optimal strategies for an arbitrary $N$, it is common to use potential functions to bound the regret above. For example, $p^e$ uses the logarithm of the sum of the exponentials of regret with respect to each expert as the potential; the corresponding upper bound is obtained by bounding the evolution of this potential for all possible adversaries. \citet{chaudhuri} and \citet{luo_ada} proposed other potential-based player algorithms for variations of the expert problem with different notions of regret and/or additional structure.

\citet{rakhlin2012} proposed a principled way of deriving potential-based player strategies by bounding above the value function in a manner that is consistent with its recursive minmax form. \citet{rokhlin} suggested using supersolutions of the asymptotic PDE as potentials for player strategies leading to upper bounds. The present paper extends these ideas by applying related arguments to broad classes of potentials, and by providing lower as well as upper bounds.

Adversary strategies have been commonly studied as random processes. For example, for any  $N$, $a^r$ guarantees that the leading order regret is bounded below by the expectation of the maximum of $N$ i.i.d. Gaussians.\footnote{See Theorem 3.7 in \citet{cesa-bianchi06}.}
This guarantee is based on the central limit theorem and is therefore asymptotic in $T$. Nonasymptotic lower bounds have been established using random walk methods. \citep{orabona, gyorgy}. 

The player's and the adversary's selection of strategies is fundamentally a problem of optimal control. Adopting such a viewpoint, in this paper we propose a control-based framework for designing strategies for the expert problem using sub- and supersolutions of certain PDEs. Our principal conceptual advances are the following.  
\begin {enumerate} 
\item The potential-based framework is extended to adversary strategies, leading to lower bounds (Section \ref{sec:lb_fixed}).
\item The task of finding better regret bounds reduces to the mathematical problem of finding better sub- and supersolutions of certain PDEs (See Equations \eqref{eq:pde_fixed_lb} and \eqref{eq:pde_fixed_ub}). 
\item Our bounds hold for any given number of experts and are nonasymptotic in $T$; their rate of convergence to the asymptotic (in $T$) value is determined explicitly using error estimates similar to those used in finite difference schemes in numerical analysis.  (Theorems \ref{thm:fixed_lb} and \ref{thm:fixed_ub}).
\end{enumerate}

These conceptual advances not only provide a fresh perspective on the expert problem, but also lead in some cases to improved bounds.  Specifically, we apply our framework to two classes of potentials. The first class is discussed in Section \ref{sec:heat_fixed}, where we use  classical solutions of the linear heat equation with suitable diffusion factors as lower and upper bound potentials.  The leading order term of the resulting lower bound is the expectation of the maximum of $N$ i.i.d. Gaussians with mean zero, and is therefore similar to the existing lower bound given by $a^r$. However, the constant factor of the leading order term (i.e., the variance of the Gaussians) is state-of-the-art. Additionally, we improve the bounds on the higher order (error) terms (Section \ref{sec:related_work}).

A second class of potentials is discussed in Section \ref{sec:max_fixed}. They are closed-form solutions of a nonlinear PDE where the spatial operator involves the largest diagonal entry of the Hessian.  For up to three experts, the lower and upper bounds obtained using this potential  match to leading order as the number of time steps approaches infinity. Therefore, the corresponding strategies are optimal to leading order.  The same leading order result for three experts was determined in \citet{abbasi};  our approach, however, provides a smaller error term. Also, for small $N$ and relatively large number of time steps our upper bound is tighter than the one obtained using $p^e$ (Section \ref{sec:related_work}).

 \section{Notation} 
 
 The ``spatial variables'' and ``spatial derivatives'' of a function $u(x,t)$ are $x \in \mathbb R^N $
and the derivatives of $u$ with respect to $x$. For a multi-index $I$, $\partial_{I}$ refers to the partial derivative and $dx_I$ refers to the differential with respect to the spatial variable(s) in $I$, and $d \hat x_I$ refers to the differential with respect to all except the spatial variables in $I$.  $D^2 u$, $D^3 u$ and $D^4 u$  refer to the Hessian, 3rd derivative, and 4th derivative of $u$ with respect to $x$ (which are 2nd order, 3rd order and 4th order tensors respectively); the associated multilinear forms $\langle D^2 u \cdot q, q \rangle $, $D^3 u [q,q,q]$, $D^4 u [q,q,q,q]$ are $\sum_{i,j} \partial_{ij} u~ q_{i} q_{j} $, $\sum_{i,j,k} \partial_{ijk} u~ q_{i} q_{j} q_k $ and $\sum_{i,j,k,l} \partial_{ijkl} u~ q_{i} q_{j} q_k q_l $.

Prediction with expert advice is a repeated two-person game. It is convenient to denote the time $t$ by nonpositive numbers such that the starting time is $T \leq -1$ and the final time is zero.  The vector $r_{\tau} = q_{I_{\tau}, \tau} \mathbb 1  -  q_{\tau}$ denotes the player's losses realized in round $\tau$ relative to those of each expert (\textit {instantaneous regret}) and the vector $x_t = \sum_{\tau <t } r_{\tau}$ denotes the player's cumulative losses realized before the outcome of round $t$ relative to those of each expert (cumulative regret or simply the \textit {regret}).

If $u$ is a function of space and time, subscripts $x$ or $t$ denote partial derivatives (so $u_x$ and $u_t$ are first derivatives and $u_{xx}$, $u_{xt}$ and $u_{tt}$ are second derivatives). In other settings, the subscript $t$ is an index; in particular, our adversary and player strategies at time $t$ are $a_t$ and $p_t$ and the expert losses and player's choice at time $t$ are $q_t$ and $I_t$. When no confusion will result, we sometimes omit the index $t$, writing for example $q$ rather than $q_t$; in such a setting, $q_i$ refers to the $i$th component of $q_t$. 

If $u$ is a function, $\Delta u = \sum_i \frac{\partial^2 u}{\partial x_i^2}$ is its Laplacian; however, the standalone symbol $\Delta_N$ refers to the set of probability distributions on $\{1, ..., N\}$.
$[T]$ denotes the set  $\{1, ..., T\}$ if $T \geq 1$ or $\{T, ..., -1\}$ if $T \leq -1$.  $\mathbb 1$ is a vector in $\mathbb R^N$ with all components equal to 1,  but $\mathbb 1_S$ refers to the indicator function of the set $S$.  

A \textit{classical solution} of a partial differential equation (PDE) on a specified region is a solution such that all derivatives appearing in the statement of the PDE exist and are continuous on the specified region.
 
\section{Lower Bounds}
\label{sec:lb_fixed}  Our lower bounds are associated with well-chosen strategies for the adversary. We shall consider adversary strategies that are \textit{Markovian}, in the sense that the strategy at time $t$ can depend only on the cumulative regret $x$ and time $t$.  For a given adversary $a$, it is natural to consider the associated value function $v_a(x,t)$, defined as the final-time regret achieved by the adversary (assuming the player behaves optimally) if the prediction game starts at time $t$ with cumulative regret vector  $x$. It is characterized by a dynamic program (DP):\footnote {Our use of dynamic programming is related to the arguments used in  Section 3 in \citet {cesa-bianchi97} to show that the optimal strategies are Markovian. Our use is different, however, (and simpler) since we assume from the start that the adversary's strategy is Markovian.}
\begin {align}
  v_a(x, 0) &=  \max_{ i } x_i ~~\text{and}~~ v_a(x, t) = \min_{p}  \mathbb E_{a_t, p} ~v_a(x+r, t+1)  ~\text{for} ~t \leq -1
  \label{eq:va_dp}
\end{align}
Working backward in time, the DP determines the player's optimal strategy at each time. It is clearly Markovian, in the sense that this strategy depends only on the time $t$ and the cumulative regret $x$ at that time.

In the context of lower bounds, we shall consider only adversaries that assign the same expectation of each component of $q$:  $\mathbb E_{a_t} q = c_t \mathbb 1$ for some $c_t \in [-1,1]$ and all $t < -1$ (\textit{balanced adversaries}).  To bound $v_a$ below, we introduce the following class of potential functions, or simply \textit{potentials}.  As described more fully in Section \ref{sec:relationship_to_dk}, such a potential bounds below the minimax optimal (asymptotically in T) value  because the potential is a \textit{subsolution} of the nonlinear PDE \eqref{eq:drenska_pde} obtained in \citet{drenska2019prediction}.
\begin{tcolorbox} 
\textit{A lower bound potential} is a function $u: \mathbb R^N \times \mathbb R_{\leq 0}  \rightarrow \mathbb  R$ such that for each $x \in \mathbb R^N$ and $t <0$, there is a balanced strategy $a_t$ on $ [-1, 1]^N$ ensuring that $u$ is a classical solution of 
\begin{subnumcases}{\label{eq:pde_fixed_lb}}
 u_t +  \frac {1}{2}  \mathbb E_{a_t} \langle D^2 u \cdot q, q \rangle \geq 0 \label{eq:pde_ineq_lb}\\
 u(x,0) \leq \max_i x_i ~~\text{and}  ~~u(x+ c \mathbb 1, t)=u(x, t)+c  \label{eq:pde_boundary_lb}
\end{subnumcases}
At $t < -1$, an \textit{adversary associated with $u$} is a balanced strategy $a_t$ such that \eqref{eq:pde_ineq_lb} is satisfied at $(x, t+1)$. At $t = -1$, any distribution $a_{-1}$ over $[-1,1]^N$ may be used. 
 \end{tcolorbox}
We prove in Appendix \ref{app:thm_fixed_lb}, using induction backward in time, that this potential bound below the adversary's optimal value $v_a$, modulo an ``error" term $E(t)$ which can be estimated explicitly.  This provides a lower bound on regret since $v_a(0, T) = \min_p R_T(a,p)$. Note that while the definition of $v_a$ involves an optimization over the player strategy $p$, the definition of $u$ does not. Examination of the proof (in Appendix \ref{app:thm_fixed_lb}) reveals that our lower bound is insensitive to $p$  because the adversary strategy $a$ is balanced. 
\begin{theorem} [Lower bound]
 \label{thm:fixed_lb}
 Let $u$ be a lower bound potential and let $v_a$ be the value function of the associated adversary $a$.  Then, $u(x,t)-E(t) \leq v_a(x,t)$ where the error term $E(t) = C+ \sum _{\tau = t}^{-2}  K(\tau)$ is computed using: (i) a bound on the decrease of $u$ at the last period, which is a constant $C$ satisfying  $u(x,-1) - \min_{p} \mathbb E_{a_{-1},p} ~u  (x + r,  0)  \leq C$ for all $x$, and (ii) an error estimate $K$ of the Taylor approximation of $u$ in the earlier periods.  If $u_{t}(x,\cdot)$ and $D^2u(\cdot, \tau+1) $ are Lipschitz continuous, then any function K satisfying $
 \frac {1}{2}\text{ess sup}_{  \bar \tau \in [\tau, \tau+1] }   u_{tt}(x,\bar \tau ) + \frac {1}{6}   \text{ess sup}_{y \in [x, x -q]} ~D^3u(y,\tau+1) [q,q,q]   \leq K(\tau)$ for all $\tau \in [t,  -2]$, all $q$ in the support of $a_{\tau}$ and all $x$, may be used to compute E(t).
 \end{theorem}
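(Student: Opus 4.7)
My plan is to prove the bound $u(x,t)-E(t)\leq v_a(x,t)$ by backward induction on $t$, starting at the terminal time $t=0$ and descending one step at a time while accumulating the error.  At each step the inductive hypothesis at $t+1$ is combined with the dynamic program $v_a(x,t)=\min_p \mathbb{E}_{a_t,p} v_a(x+r,t+1)$ to push $u$ forward in time, picking up an extra error contribution from either the constant $C$ (at the last period $t=-1$) or the Taylor remainder $K(\tau)$ (for $t\leq -2$).

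The base case at $t=0$ is immediate from the boundary inequality in \eqref{eq:pde_boundary_lb}: $u(x,0)\leq \max_i x_i = v_a(x,0)$.  The step from $t=0$ to $t=-1$ uses $C$ directly: the defining inequality $u(x,-1)-C\leq \min_p \mathbb E_{a_{-1},p} u(x+r,0)$, combined with the base case inside the expectation and the dynamic program, gives $u(x,-1)\leq v_a(x,-1)+C$.  The separate treatment of the final period is needed because $u(\cdot,0)=\max_i x_i$ is only Lipschitz in $x$, so a spatial Taylor expansion against it would not produce a useful bound.

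For the main induction step at $t\leq -2$, the goal is to show $u(x,t)\leq \min_p \mathbb E_{a_t,p} u(x+r,t+1) + K(t)$, which together with the inductive hypothesis and the dynamic program yields $u(x,t)\leq v_a(x,t)+E(t+1)+K(t)=v_a(x,t)+E(t)$.  To prove the one-step bound, I would first exploit translation invariance: differentiating $u(x+c\mathbb 1,t)=u(x,t)+c$ in $c$ gives $\langle Du,\mathbb 1\rangle = 1$, and since $r=q_I\mathbb 1 - q$ with $I\sim p$ conditionally independent of $q\sim a_t$, this yields $u(x+r,t+1)=u(x-q,t+1)+q_I$.  Taking expectation and using the balanced property $\mathbb E_{a_t}q = c_t\mathbb 1$ makes
\begin{equation*}
\mathbb E_{a_t,p}\,u(x+r,t+1) \;=\; \mathbb E_{a_t}\,u(x-q,t+1) + c_t,
\end{equation*}
which is independent of $p$ (the key feature that lets the lower bound hold uniformly in the player).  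Next, Taylor-expand $u(x-q,t+1)$ in $q$ around $x$ to third order with Lagrange remainder; the first-order term is $-\langle Du(x,t+1),q\rangle$ whose expectation is $-c_t\langle Du,\mathbb 1\rangle = -c_t$, precisely cancelling the $+c_t$ shift above.  Then expand $u(x,t+1)$ in time around $(x,t)$ to second order, and invoke the PDE inequality \eqref{eq:pde_ineq_lb} at $(x,t+1)$ to cancel the leading drift $u_t(x,t+1)+\tfrac12\mathbb E_{a_t}\langle D^2u(x,t+1)\cdot q,q\rangle\geq 0$.  What remains is exactly
\begin{equation*}
u(x,t)-\mathbb E_{a_t,p}u(x+r,t+1) \;\leq\; \tfrac12\, u_{tt}(x,\bar\tau) + \tfrac16\,\mathbb E_{a_t}\,D^3 u(y(q),t+1)[q,q,q]
\end{equation*}
for some $\bar\tau\in[t,t+1]$ and $y(q)\in[x,x-q]$, which is pointwise $\leq K(t)$ by hypothesis.

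The main obstacle I anticipate is bookkeeping the cancellations cleanly.  The $+c_t$ coming from $\mathbb E_{a_t,p}[q_I]$ must be matched against the first-order Taylor term with the correct sign, and the PDE inequality must be applied at $(x,t+1)$ rather than $(x,t)$ to match the base point of the spatial Taylor expansion---this is exactly why the definition of an associated adversary requires \eqref{eq:pde_ineq_lb} to hold at $(x,t+1)$.  The Lipschitz hypotheses on $u_t(x,\cdot)$ and $D^2 u(\cdot,\tau+1)$ legitimize the use of integral-form (or a.e.\ Lagrange-form) remainders so that replacing the pointwise remainders by the corresponding essential suprema is valid.
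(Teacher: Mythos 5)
Your proposal is correct and follows essentially the same route as the paper's Appendix A proof: backward induction from $t=0$, a separate treatment of the last step via the constant $C$, and for $t\leq -2$ the decomposition $r=q_I\mathbb 1 - q$, translation invariance to cancel the first-order Taylor term against $\mathbb{E}[q_I]=c_t$, and the PDE inequality invoked at $(x,t+1)$ to absorb the drift. One small wording slip: you should expand $u(x,t)$ in time around $(x,t+1)$ (not $u(x,t+1)$ around $(x,t)$) so that the resulting drift term is $u_t(x,t+1)$ matching the base point of the spatial expansion, which is in fact what your final displayed inequality implicitly does.
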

If the adversary assigns the same probability to $q$ and $-q$ to each $q$ in its support (a \textit{symmetric adversary}) and the potential is smooth enough, there is an alternative estimate for the error term, proved in Appendix \ref{app:fixed_lb_lipschitz}, which in some examples gives a better result. 
\begin{proposition} [Symmetric adversary and smooth potential]
 \label{rem:fixed_lb_lipschitz}  If the adversary $a$ associated with $u$ is symmetric, and $D^3 u(\cdot,t+1)$ exists and is Lipschitz continuous, then in Theorem \ref{thm:fixed_lb} any function K satisfying $\frac {1}{2}\text{ess sup}_{  \bar \tau \in [\tau, \tau+1] }    u_{tt}(x,\bar \tau)   -\frac {1}{24}  \text{ess inf}_{y \in [x, x -q]} ~D^{4}u(y,\tau+1) [q,q,q,q]   \leq K(\tau)$ for  all $\tau \in [t, -2]$,  all $q$ in the support of $a_{\tau}$ and all $x$, may be used to compute E(t).  
\end{proposition}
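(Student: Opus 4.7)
The plan is to re-run the backward induction on $t$ from the proof of Theorem \ref{thm:fixed_lb}, changing only the spatial Taylor expansion of $u(x-q,\tau+1)$. Since the adversary $a_\tau$ is symmetric (so $q$ and $-q$ are equally likely), the odd-order spatial monomials vanish in expectation when evaluated at the $q$-independent point $x$, so I can afford to expand to fourth order and pay only a fourth-order remainder. The base case at $t=0$ and the last-step constant $C$ are unchanged; only the per-step estimate $K(\tau)$ for $\tau \in [t,-2]$ needs to be recomputed.

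For the inductive step, I would start with the rewrite $\mathbb{E}_{a_\tau,p}[u(x+r,\tau+1)] = \mathbb{E}_{a_\tau,p}[u(x-q,\tau+1)] + \mathbb{E}[q_{I_\tau}]$, using $r = q_{I_\tau}\mathbb{1} - q$ and the translation identity $u(x+c\mathbb{1},t) = u(x,t)+c$ from \eqref{eq:pde_boundary_lb}. Symmetry of $a_\tau$ gives $\mathbb{E}_{a_\tau}[q]=0$, hence $\mathbb{E}[q_{I_\tau}]=0$ by conditional independence of $q_\tau$ and $I_\tau$; this kills the $p$-dependence just as in Theorem \ref{thm:fixed_lb}. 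Since $D^3 u(\cdot,\tau+1)$ is Lipschitz it is absolutely continuous along the segment $[x,x-q]$, so the integral-remainder form of Taylor's theorem applied to $s \mapsto u(x-sq,\tau+1)$ on $[0,1]$ gives
\begin{equation*}
u(x-q,\tau+1) = u(x,\tau+1) - \langle \nabla u, q\rangle + \tfrac{1}{2}\langle D^2 u \cdot q, q\rangle - \tfrac{1}{6} D^3 u(x,\tau+1)[q,q,q] + \int_0^1 \tfrac{(1-s)^3}{6}\, D^4 u(x-sq,\tau+1)[q,q,q,q]\, ds.
\end{equation*}
Under $\mathbb{E}_{a_\tau}$ the linear and cubic terms (odd in $q$, evaluated at $x$) vanish, and the non-negative weight $(1-s)^3/6$ integrates to $1/24$, bounding the remainder below by $\tfrac{1}{24}\,\text{ess inf}_{y\in[x,x-q]} D^4 u(y,\tau+1)[q,q,q,q]$.

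To close, I would Taylor-expand $u(x,\tau+1)$ in time about $\tau+1$ to obtain $u(x,\tau+1) \geq u(x,\tau) + u_t(x,\tau+1) - \tfrac{1}{2}\,\text{ess sup}_{\bar\tau \in [\tau,\tau+1]} u_{tt}(x,\bar\tau)$, and then invoke the subsolution inequality \eqref{eq:pde_ineq_lb} at $(x,\tau+1)$ to cancel $u_t(x,\tau+1) + \tfrac{1}{2}\,\mathbb{E}_{a_\tau}\langle D^2 u \cdot q, q\rangle \geq 0$. The result is $\mathbb{E}_{a_\tau,p}[u(x+r,\tau+1)] \geq u(x,\tau) - K(\tau)$ with the advertised $K$, and the induction closes exactly as in Theorem \ref{thm:fixed_lb}.

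The main delicacy is handling the cubic cancellation under only almost-everywhere existence of $D^4 u$: Lagrange's form of the remainder is not available, so using the integral form of Taylor's theorem is essential, and the cubic must be evaluated at the fixed base point $x$ (not at the $q$-dependent intermediate point that Lagrange would produce) so that the symmetry argument can cleanly annihilate it in expectation.
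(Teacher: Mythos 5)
Your proof is correct and follows essentially the same route as the paper's Appendix~\ref{app:fixed_lb_lipschitz}: re-run the backward induction of Theorem~\ref{thm:fixed_lb}, replace the third-order Taylor expansion with a fourth-order one with integral remainder (available because $D^3 u$ is Lipschitz, hence $D^4u$ exists a.e.\ and is essentially bounded), use the symmetry of $a_\tau$ to annihilate the odd-order terms (both $\nabla u\cdot q$ and $D^3 u(x,\tau+1)[q,q,q]$), bound the remainder by $\tfrac{1}{24}\,\mathrm{ess\,inf}_{y\in[x,x-q]} D^4u(y,\tau+1)[q,q,q,q]$, and then apply the time expansion and subsolution inequality~\eqref{eq:pde_ineq_lb} as before. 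One minor point in your favor: the paper's displayed replacement of~\eqref{eq:spatial_taylor_bound} writes $\mathrm{ess\,sup}$ where the lower bound on the nonnegative-weight integral remainder requires $\mathrm{ess\,inf}$; the Proposition's statement and your argument have it right, so this is a typo in the paper's appendix rather than a gap in the paper's reasoning. Your remark on why Lagrange's form is unavailable and the cubic must be anchored at the fixed point $x$ is exactly the delicacy the paper implicitly navigates.
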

In what follows, we will apply our framework to obtain a fresh perspective on the best existing lower bounds and we will obtain improved lower bounds. Specifically, in Example \ref{ex:randomized_lb}, using the heat potential $\varphi$ given by \eqref{eq:heat_potential} with the diffusion factor $\kappa = \frac{1}{2}$, we recover the  well-known asymptotic lower bound associated with the randomized adversary $a^r$. We also show that the so-called comb adversary $a^c$ does at least as well as $a^r$ at leading order in the limit as $|T| \rightarrow \infty$. By applying Proposition \ref{rem:fixed_lb_lipschitz}, we obtain explicit nonasymptotic bounds for both adversaries.  In Example \ref{ex:heat_bounds}, we introduce a new heat adversary $a^h$, associated with the heat potential with a higher diffusion factor $\kappa_h > \frac{1}{2}$, which improves upon the lower bound associated with $a^r$ and $a^c$. For $N=2$,  $a^h$ is asymptotically optimal.\footnote{For $N=2$, $a^c$ is the same as $a^h$.} 

Section \ref{sec:max_fixed} applies our framework to an adversary associated with the new max potential $\psi$ given by \eqref{eq:max_pde_soln}. This adversary is asymptotically optimal for $N=2$ and $3$.\footnote{For $N=2$, the adversary associated the max potential is identical to $a^c$ and $a^h$.}

\section {Upper Bounds} 

Our upper bounds are associated with strategies for the player given by the gradient of specific potentials. We shall only consider potentials that can depend, at time $t$, only on the cumulative regret $x$ and time $t$. Consequently, our player strategies are Markovian. In parallel to the discussion above, for a given player $p$, we consider the value function $v_p(x,t)$ defined as the final-time regret achieved by this player (assuming the adversary behaves optimally) if the prediction game begins at time $t$ with cumulative regret vector $x$. It is characterized by the following DP:
\begin {align}
  v_p(x, 0) =  \max_{ i } x_i ~~\text {and}~~ v_p(x, t) = \max_{a}  \mathbb E_{a, p_t} ~v_p(x+r, t+1)  ~\text{for} ~t \leq -1\label{eq:vp_dp}
\end{align}
Working backward in time, this DP determines the adversary's optimal strategy at each time, and this strategy is also Markovian. 

To bound $v_p$ above, we introduce the following class of potentials.  As described more fully in Section \ref{sec:relationship_to_dk}, such a potential bounds above the minimax optimal (asymptotically in $T$) value  because the potential is a \textit{supersolution} of the PDE \eqref{eq:drenska_pde}.

\begin{tcolorbox} 
  \textit{An upper-bound potential} is a function $w: \mathbb R^N \times \mathbb R_{\leq 0}  \rightarrow \mathbb  R$, which is nondecreasing as a function of each $x_i$, and which is, for all $x \in \mathbb R^N$ and $t <0$ is a classical solution of 
\begin{subnumcases}{\label{eq:pde_fixed_ub}}
 w_t +  \frac {1}{2}\max_{q \in [- 1,1]^N} \langle D^2 w \cdot q, q \rangle \leq 0 \label{eq:pde_ineq_ub} \\
 w(x,0) \geq \max_i x_i   ~~\text{and} ~~w(x+ c \mathbb 1, t)=w(x, t)+c  \label{eq:pde_boundary_ub}
\end{subnumcases}
 \textit{The player strategy} $p$ associated with $w$ is: At  $t <-1 $, the player selects $p_t = \nabla w(x,t+1)$, and at $t = -1$, the player selects an arbitrary distribution  $p_{-1} \in \Delta_N$. 
 \end{tcolorbox}
\noindent At $t <-1 $, since $w$ is nondecreasing in each $x_i$, $p_{i,t} \geq 0$. Also $\sum_{i} \partial_i w =1$ by linearity of $w$ along $\mathbb 1$,  which implies that $\sum_{i} p_{i,t}=1$. Therefore, at $t <-1$, $p_t \in \Delta_N$  as well. 

The following Theorem is proved in Appendix \ref{app:thm_fixed_ub} using induction backward in time. It shows that an upper bound potential $w$ bounds above for the value function $v_p$, modulo an ``error" term $E(t)$. This provides an upper bound on the regret since $\max_a R_T(a,p) = v_p(0, T)$.  The argument (which is parallel to that for Theorem \ref{thm:fixed_lb}) uses Taylor expansion to estimate how $w$ changes as regret accumulates. The player strategy ensures that the first-order term of the Taylor expansion vanishes regardless of the adversary strategy $a_t$.  
\begin{theorem} [Upper bound] 
\label{thm:fixed_ub} Let $w$ be an upper bound potential and let $v_p$ be the value function of the associated player $p$.  Then,  $v_p(x,t) \leq w(x,t)+E(t)$ where the error term $E(t) = C+ \sum _{\tau = t}^{-2} K(\tau)$ is computed using: (i) the bound on the increase of $w$ at the last period, which is a constant $C$ satisfying 
 $\max_{a} \mathbb E_{a, p_{-1}} ~w  ( x + r,  0)  - w(x,-1)  \leq C$ for all $x $, and (ii) an error estimate K of the Taylor approximation of $w$ in the earlier periods. If $w_{t}(x,\cdot)$ and $D^2w(\cdot,\tau+1)  $  are Lipschitz continuous, then any function K satisfying $- \frac {1}{2} \text{ess inf}_{\bar \tau \in [\tau, \tau+1] }  w_{tt}(x,\bar \tau) -\frac {1}{6}   \text{ess inf}_{y \in [x, x -q]} ~  D^3w(y,\tau+1) [ q,q,q]  \leq K(\tau)$ for all $\tau \in [t,-2]$, all $q \in [- 1,1]^N$ and all $x$ may be used to compute $E(t)$.
\end{theorem}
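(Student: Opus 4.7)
The plan is to prove the statement by backward induction in $t$, with inductive hypothesis $v_p(\cdot,s) \leq w(\cdot,s) + E(s)$ for every $s > t$, in parallel with the proof of Theorem \ref{thm:fixed_lb}. At the terminal step $t=-1$, the DP \eqref{eq:vp_dp} together with the boundary inequality in \eqref{eq:pde_boundary_ub} gives $v_p(x,-1) = \max_a \mathbb E_{a,p_{-1}}[\max_i (x+r)_i] \leq \max_a \mathbb E_{a,p_{-1}}[w(x+r,0)] \leq w(x,-1)+C$ by the very definition of $C$. For the inductive step at $t \leq -2$, applying \eqref{eq:vp_dp} and the inductive hypothesis reduces the task to proving $\max_a \mathbb E_{a,p_t}[w(x+r,t+1)] \leq w(x,t) + K(t)$.

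The core of the argument is a Taylor expansion of $w(x+r,t+1)$ in both space and time. First I exploit the translation invariance in \eqref{eq:pde_boundary_ub}: since $r = q_{I_t}\mathbb 1 - q$ and $I_t \sim p_t$, $q \sim a_t$ are conditionally independent, taking expectation over $I_t$ and using $w(y+c\mathbb 1,\cdot) = w(y,\cdot)+c$ yields $\mathbb E_{a_t,p_t}[w(x+r,t+1)] = \mathbb E_{a_t}[w(x-q,t+1) + \langle p_t, q\rangle]$. I then expand $w(x-q,t+1)$ to third order about $(x,t+1)$, producing a Lagrange remainder $-\frac{1}{6} D^3 w(y,t+1)[q,q,q]$ with $y \in [x,x-q]$. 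The player's choice $p_t = \nabla w(x,t+1)$ was designed precisely so that the first-order term $-\langle \nabla w(x,t+1),q\rangle$ cancels $\langle p_t,q\rangle$ pointwise in $q$, independently of the adversary.

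After this cancellation I expand $w(x,t+1) = w(x,t) + w_t(x,t+1) - \frac{1}{2} w_{tt}(x,\bar\tau)$ in time with $\bar\tau \in [t,t+1]$ and combine the two expansions. The PDE inequality \eqref{eq:pde_ineq_ub} evaluated at $(x,t+1)$ applies since $\mathrm{supp}(a_t) \subset [-1,1]^N$, so $\mathbb E_{a_t}\langle D^2 w(x,t+1) \cdot q,q\rangle \leq \max_{q \in [-1,1]^N}\langle D^2 w(x,t+1) \cdot q,q\rangle$, and the combination $w_t(x,t+1) + \frac{1}{2}\mathbb E_{a_t}\langle D^2 w(x,t+1)\cdot q,q\rangle$ is therefore nonpositive. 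What remains is $-\frac{1}{2}w_{tt}(x,\bar\tau) - \frac{1}{6}\mathbb E_{a_t}[D^3 w(y,t+1)[q,q,q]]$, which the stated ess inf hypotheses bound above by $K(t)$ uniformly in $a_t$, so the max over $a_t$ may be taken on the outside. Closing the induction gives $v_p(x,t) \leq w(x,t) + E(t+1) + K(t) = w(x,t) + E(t)$.

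The main obstacle is of a bookkeeping nature rather than an analytic one: I must arrange the expansion so that the first-order cancellation occurs before the max over $a_t$, so that a single player strategy $p_t$ works uniformly against every adversary, and I must handle the measure-theoretic content of the Lipschitz-only hypotheses on $w_{tt}$ and $D^2 w$ through the ess inf formulation of the remainders. The asymmetry with Theorem \ref{thm:fixed_lb} is encoded in \emph{who} kills the first-order term: here it is the gradient-based player $p_t = \nabla w(x,t+1)$, whereas in the lower bound it was the balancedness of $a_t$.
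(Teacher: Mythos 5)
Your proof is correct and follows essentially the same route as the paper's: backward induction, the translation-invariance plus conditional-independence step to reduce to $\mathbb E_{a_t}[w(x-q,t+1)+\langle p_t,q\rangle]$, first-order cancellation via $p_t=\nabla w(x,t+1)$, spatial and temporal Taylor expansions with ess-inf remainders, and the PDE supersolution inequality \eqref{eq:pde_ineq_ub}. The only cosmetic differences are that you fold the $t=-1$ step into the base case (the paper starts at $t=0$ and treats $t=-1$ as the first inductive step) and you call the spatial remainder a Lagrange remainder where the paper uses the integral form — but since you ultimately invoke the ess-inf bound, the substance is the same.
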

If an upper bound potential has the form
\begin{align}
w(x,t) =  \Phi(x)+ct
\label{eq:stationary_players}
\end{align} for a constant $c$, the player $\nabla w(x)$ does not depend on time. Therefore, we can let the player strategy to be $\nabla w(x)$ at $t=-1$, instead of an arbitrary distribution. The following Proposition, proved in Appendix \ref{app:constant_w_t}, is similar to Theorem 1 in \citet {rokhlin}, and in this setting, the error term does not appear.
\begin{proposition} [Certain potentials]
 \label{rem:constant_w_t}  If, in the setting of Theorem \ref{thm:fixed_ub}, $w$ has the form \eqref{eq:stationary_players}, and the player strategy is $\nabla w(x)$ in all periods, then $v_{p}(x,t) \leq  w(x,t)$. 
\end{proposition}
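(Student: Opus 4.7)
The plan is to run the same backward induction on $t$ that underlies Theorem \ref{thm:fixed_ub}, exploiting the form $w(x,t)=\Phi(x)+ct$ to eliminate every source of error. The base case $v_p(x,0)\le w(x,0)$ is immediate from the terminal condition $w(x,0)\ge\max_i x_i$. For the inductive step, assume $v_p(\cdot,t+1)\le w(\cdot,t+1)$, so the dynamic programming identity \eqref{eq:vp_dp} yields $v_p(x,t)\le\max_a\mathbb E_{a,p_t}w(x+r,t+1)$, where $r=q_I\mathbb 1-q$ and $p_t=\nabla\Phi(x)$ for every $t\le -1$, including $t=-1$. It thus suffices to show $\mathbb E_{a,p_t}[w(x+r,t+1)-w(x,t)]\le 0$ for every adversary $a$.

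Writing $w(x+r,t+1)-w(x,t)=\Phi(x+r)-\Phi(x)+c$ (the time increment is exact because $w_t\equiv c$) and using $\mathbb 1$-translation invariance of $\Phi$ to rewrite $\Phi(x+q_I\mathbb 1-q)=\Phi(x-q)+q_I$, one finds
\[
\mathbb E_{a,p_t}\bigl[\Phi(x+r)-\Phi(x)\bigr]=\mathbb E_a\bigl[\Phi(x-q)-\Phi(x)+\langle\nabla\Phi(x),q\rangle\bigr],
\]
since $\mathbb E_{a,p_t}q_I=\langle p_t,\mathbb E_a q\rangle=\langle\nabla\Phi(x),\mathbb E_a q\rangle$ cancels the $-\langle\nabla\Phi(x),\mathbb E_a q\rangle$ contributed by the first-order piece of $\mathbb E_a[\Phi(x-q)-\Phi(x)]$. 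Applying the integral form of Taylor's remainder,
\[
\Phi(x-q)-\Phi(x)+\langle\nabla\Phi(x),q\rangle=\int_0^1(1-s)\langle D^2\Phi(x-sq)\cdot q,q\rangle\,ds,
\]
and invoking the PDE \eqref{eq:pde_ineq_ub} specialized to $w=\Phi+ct$, which reads $\tfrac12\langle D^2\Phi(y)\cdot q,q\rangle\le -c$ uniformly in $y\in\mathbb R^N$ and $q\in[-1,1]^N$, one bounds the integrand pointwise by $-2c$ and uses $\int_0^1(1-s)\,ds=\tfrac12$ to obtain $\mathbb E_a[\Phi(x+r)-\Phi(x)]\le -c$. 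Combined with the exact time contribution $+c$, the total is $\le 0$, closing the induction.

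The main reason no error term appears is that $w=\Phi+ct$ has $w_t$ constant, which kills the Lagrange time remainder, while the integral form of the spatial remainder lets the uniform-in-$y$ PDE inequality dispose of the second-order contribution directly, rather than leaving a $D^3w$ piece to be absorbed into $K(\tau)$. The only ancillary check, and the one place where the statement diverges from Theorem \ref{thm:fixed_ub}, is that $p_{-1}=\nabla\Phi(x)$ is a valid probability distribution; this follows from $w$ being nondecreasing in each $x_i$ and $\mathbb 1$-linear, exactly as in the discussion immediately after the definition of an upper-bound potential. That is precisely what lets the same PDE-based step be applied unchanged at $t=-1$, where Theorem \ref{thm:fixed_ub} would otherwise introduce the constant $C$.
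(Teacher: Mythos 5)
Your proof is correct and follows essentially the same route as the paper: backward induction via the dynamic program, exact handling of the time increment because $w_t\equiv c$, Taylor expansion of $\Phi$ in $x$ at every step including $t=-1$, and the PDE inequality \eqref{eq:pde_ineq_ub} to control the second-order term. The only cosmetic difference is that you use the integral form of the second-order Taylor remainder (bounding the integrand pointwise), while the paper uses the Lagrange mean-value form; both close the argument identically.
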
  
As an example, we recover the classic upper bound for the exponentially weighted forecaster $p^e$. Let the potential $w^e$ be given by $w^e (x,t) =\Phi(x) -  \frac {1}{2 } \eta t$ where  $\Phi(x) = \frac{1}{\eta} \log(\sum_{k \in[N]} e^{\eta x_k})$. In Appendix \ref{app:exp_ub}, we show that $\max_{q \in [- 1,1]^N} \langle D^2 \Phi \cdot q, q \rangle \leq \eta$.  Also $w^e(x,0) \geq \max_i x_i$, and $\Phi (x+c\mathbb 1) =  \Phi (x)+c$, which imply the same results  for $w^e$. Therefore, $w^e$ satisfies \eqref{eq:pde_fixed_ub} and Proposition \ref{rem:constant_w_t} provides the following result. 
\begin {example}[Exponential weights] 
For the value function $v_{p^e}$ of $p^e$, the following upper bound holds $v_{p^e}(x, t) \leq w^e(x, t)$. Taking $\eta = \sqrt {\frac {2  \log N}{|T|}}$ leads to the regret bound: $\max_a R(a,p^e)  \leq w^e(0, T) =\sqrt { 2|T| \log N}$.\footnote{This example provides the best known upper bound for $p^e$ and therefore gives a PDE perspective on Theorem 2.2 of \citet{cesa-bianchi06} (rescaled here to reflect $[-1,1]^N$ losses).} 
\end{example}

\section {Heat Potentials} 
\label{sec:heat_fixed}

In this section, we consider the \textit{heat potential} $\varphi$ given by
\begin{align}
\label{eq:heat_potential}
\varphi(x,t) &= \alpha \int  e^{-\frac  {\|y \|^2} {2\sigma^2} }\max_k (x_k-y_k) dy
\end{align}
where $\alpha = (2 \pi \sigma^2)^{-\frac{N}{2}}$ and $\sigma^2 = -2 \kappa t$. The linearity of the $\max$ function in the direction of $\mathbb 1$ implies that $\varphi(x+ c \mathbb 1, t)=\varphi(x, t)+c$.  This potential is the classical solution, on $\mathbb R^N \times \mathbb R_{<0}$, of the following linear heat equation
\begin{align*}
\begin{cases}
\varphi_t +\kappa \Delta \varphi =0  \\
\varphi(x,0) = \max_i x
\end{cases}
\end {align*}
 Therefore, $\varphi$ satisfies \eqref{eq:pde_boundary_lb} and \eqref{eq:pde_boundary_ub}. Let $G$ denote a  $N$-dimensional Gaussian vector with mean 0 and identity covariance. By the definition of the heat potential, 
\begin{align}
\varphi(0,T) = \sqrt {-2\kappa T} \mathbb E_G \max G_i
\label {eq:phi_at_final_time}
\end{align} 
Let $E_{l.b.}^\varphi$ denote the error term within the meaning of Theorem \ref{thm:fixed_lb} for the lower bound potential $\varphi$ with any $\kappa \in [\frac {1}{2},1]$ and any adversary supported on $\{\pm 1\}^N$. Appendix \ref{app:lb_err} shows that since $\varphi$ is smooth, by Proposition \ref{rem:fixed_lb_lipschitz} this term is $O(N\sqrt{N})$ uniformly in $t$. Theorem \ref{thm:fixed_lb} is also available and provides $E_{l.b.}^\varphi (t)=  O  (\sqrt {N \log N}+\sqrt{N} \log |t| )$. Therefore, $E_{l.b.}^\varphi(t)  =  O (  N\sqrt{N} \wedge \sqrt {N \log N}+\sqrt{N} \log |t|  )$. 

 Let $E_{u.b.}^\varphi$ denote the error term within the meaning of Theorem \ref{thm:fixed_ub} for upper bound potential $\varphi$ with $\kappa=1$.   Appendix \ref{app:ub_err} shows that $E_{u.b.}^\varphi(t)=O  (\sqrt {N \log N}+\sqrt{N} \log |t|)$.\footnote{While the asymptotic notation is used here for conciseness, the Appendices provide explicit error bounds.}

We consider the classic randomized adversary $a^r$ defined in Section \ref{sec:intro}.  Since it is symmetric, the mixed terms $\partial_{ij} \varphi q_i q_j$ have zero expectation, and consequently $ \mathbb E_{a^r}\langle D^2 \varphi \cdot q,q\rangle = \Delta \varphi$. Therefore, a lower bound potential $u^r = \varphi$ with $\kappa = \frac{1}{2}$ also satisfies \eqref{eq:pde_ineq_lb}, and we recover the classic asymptotic lower bound for $a^r$ with a new nonasymptotic error term in Example \ref{ex:randomized_lb}.  Moreover, since both inequalities in \eqref{eq:pde_fixed_lb} are satisfied with equalities, the proof of Theorem \ref{thm:fixed_lb} shows that the difference between $v_{a^r}$ and  $u^r$ is \textit{entirely} attributable to the error term $E_{l.b.}^\varphi$. Therefore, $u^r$ has the same leading order term as $v_{a^r}$, i.e., $\lim_{T \rightarrow - \infty} \frac{1}{\sqrt {|T|}} (u^r(x,T) - v_{a^r}(x,T)) =0$.   

We can use the same potential $u^r$ to analyze the so-called \textit {comb adversary} $a^c$, which is defined via \textit{ranked coordinates} $\{(i)\}_{i \in [N]}$ such that $x_{(1)} \geq x_{(2)} \geq ... \geq x_{(N)}$.
\begin{tcolorbox}
 At each $t$, \textit{the comb adversary} $a^c$ assigns probability $\frac {1} {2}$ to each of $q^c $ and $-q^c$ where $q^c_{(i)} = 1$ if $i$ is odd and $q^c_{(i)} = -1$ if $i$ is even.
\end{tcolorbox}
In Appendix \ref{app:comb_adversary}, we show that  $ \langle D^2\varphi \cdot q^c,q^c\rangle \geq \Delta \varphi^r$. Therefore, $u^r$  combined with the adversary $a^c$ also satisfies \eqref{eq:pde_fixed_lb}.  \citet{gravin16} conjectured that $a^c$ might be optimal asymptotically in $T$ for any fixed $N$ and \citet{abbasi} and \citet{bayraktar2019b} showed that to be the case for $N=3$ and $4$, respectively.    We do not resolve this conjecture for general $N$, and since  \eqref{eq:pde_ineq_lb} is not satisfied with an equality, our analysis does not guarantee that $u^r$ has the same leading order term as $v_{a^c}$.  However, our result shows that the $a^c$ is at least as powerful as $a^r$.   The following example summarizes this result and the previous one.

 \begin {example} [Randomized and comb adversaries] 
\label{ex:randomized_lb} Let $u^r$ be the heat potential $\varphi$ with $\kappa = \frac{1}{2}$. Then, the value function $v_{a^r}$ of $a^r$ satisfies the following lower bound: $u^r(x,t) - E_{l.b.}^\varphi(t) \leq v_{a^r}(x,t)$.  Also $u^r$ has the same leading order term in $t$ as $v_{a^r}$.   By equation \eqref{eq:phi_at_final_time}, this bound leads to the  regret bound
$\sqrt { |T|} \mathbb E_G \max G_i - E_{l.b.}^\varphi(T) \leq   \min_p R_T(a^r,p)$.

The same lower bound holds for the value function $v_{a^c}$ of $a^c$ (without a guarantee that $u^r$ matches $v_{a^c}$ at the leading order).  
\end{example}
Since $\lim_{N \rightarrow \infty} \frac{1}{\sqrt {2 \log N}}  \mathbb E \max _i G_i=1$,\footnote {See, e.g, Lemmas A.12  in \citet{cesa-bianchi06}.} we have $\lim_{N \rightarrow \infty} \lim_{T \rightarrow -\infty} \frac{u^r(x,T) - E_{l.b.}^\varphi(T)} {\sqrt{2 |T| \log N}} = 1$. 
Thus, in the limit where $T \rightarrow -\infty$ first, and then $N \rightarrow \infty$, the value function $v_{a^c}$ of the comb adversary $a^c$ matches the upper bound given by the exponential weights player $p^e$. Therefore, this adversary is doubly asymptotically optimal (previously this was only known for  $a^r$).

Next, we introduce a new adversary $a^h$ (\textit{heat adversary}). 
\begin{tcolorbox}
 At each $t$, \textit{the heat adversary} $a^h$  samples $q_t$ uniformly from the following set $S$: 
\begin{align*}
S=    \Big \{q\in\{\pm 1 \}^N\mid \sum_{i \in [N]}q_i=\pm 1 \Big \} ~\text{if}~ N ~\text{is odd}~~ \text{or}~~  \Big\{q\in\{\pm 1\}^N\mid \sum_{i \in [N]} q_i=0  \Big\} ~\text{if}~ N~\text{is even}.
\end{align*}

 \end{tcolorbox}
This adversary is symmetric because it is the uniform distribution over the symmetric set $S$. In Appendix \ref{app:heat_lb_kappa}, we show that $\kappa_h \Delta \varphi =   \frac {1}{2} \mathbb E_{a^h}  \langle D^2 \varphi \cdot q, q \rangle$ for
\begin {align}
\kappa_h  = 1 ~~\text{if}~ N =2, ~~~~ \frac{1}{2} + \frac {1}{2N}~~ \text{if}~ N ~\text{ is odd}, ~~~\text {or}
~~~\frac{1}{2} + \frac {1}{2N-2} ~~\text{otherwise.}
\label{eq:heat_lb_kappa}
\end{align}

The potential $u^h$ given by $\varphi$ with the diffusion factor $\kappa = \kappa_h$, combined with the adversary $a^h$, satisfies \eqref{eq:pde_fixed_lb}. Also both inequalities in  \eqref{eq:pde_fixed_lb} are satisfied with equalities, and therefore, $u^h$ has the same leading order term in $t$ as $v_{a^h}$. The resulting lower bound is described in Example \ref{ex:heat_bounds}. 

Similar ideas are used to give an upper bound. In Appendix \ref{app:heat_ub}, we show that
$ \frac {1} {2}  \max_{q \in [-1,1]^N}  \langle D^2 \varphi \cdot q, q \rangle \leq    \Delta \varphi$. Also in Appendix \ref{app:heat_derivs}, we prove $\partial_{i} \varphi \geq 0$  for all $i \in [N]$. Thus, $w^h$ given by $\varphi$ with $\kappa = 1$  satisfies \eqref{eq:pde_fixed_ub} and is associated with the following strategy. 
 \begin{tcolorbox}
  At each $t<-1$, \textit{the heat player} $p^h$ selects $p^h_t = \nabla w^h(x,t+1)$ and, at $t = -1$, the player selects an arbitrary distribution in $ \Delta_N$. 
 \end{tcolorbox}
\begin {example} [New heat-based strategies]
\label{ex:heat_bounds} 
The value function $v_{a^h}$  of $a^h$ satisfies the lower bound  $u^h(x,t) - E_{l.b.}^\varphi(t) \leq v_{a^h}(x,t)$, and the value function $v_{p^h}$  of $p^h$ satisfies the upper bound  $v_{p^h}(x,t) \leq w^h(x,t) + E_{u.b.}^\varphi(t)$, where $u^h$ and $w^h$ are the potentials given above. Also $u^h$ has the same leading order term in $t$ as $v_{a^h}$. Using equation \eqref{eq:phi_at_final_time}, these bounds lead to the regret bounds $\sqrt {2\kappa_h |T|} \mathbb E_G \max G_i - E_{l.b.}^\varphi(T) \leq   \min_p R_T(a^h,p)$ and $\max_a R_T(a,p^h) \leq \sqrt {2 |T|} \mathbb E_G \max G_i + E_{u.b.}^\varphi(T)$. 
\end{example}
For two experts, the lower and upper bounds in the Example above have a matching leading order term $\sqrt {\frac {2}{\pi} |T|}$.  Therefore, the corresponding strategies are minmax optimal asymptotically in $T$.

\section{Max Potentials}
\label{sec:max_fixed}

In this section, we consider the \textit{max potential} $\psi$ given by the solution of:
\begin{align}
\label{eq:max_pde}
\begin{cases}
\psi_t + \kappa \max_{i}  \partial_i^2 \psi =0  \\
\psi(x,0) = \max_i x_i
\end{cases}
\end {align}
 \citet{abbasi}, using random walk methods, showed that an adversary $a^m$ associated with $\psi$ (the \textit {max adversary}) is asymptotically in $T$ optimal for $N=3$.
 \begin{tcolorbox}
 At each $t$, \textit {the max adversary} $a^m$ assigns equal probability to  $q^m $ and $-q^m$ where the entry of $q^m$ corresponding to the largest component of $x$ is  1 and the remaining entries are $-1$.
 \end{tcolorbox} 
 
  There is an explicit formula for $\psi$. Its building blocks are functions of the form $g(x,t) = \sqrt {-2 \kappa t} f \big (\frac {x}{\sqrt{-2 \kappa t}} \big)$ where
\begin{align}
\label{eq:sturm_ode_soln}
f(z) = \sqrt{\frac {2}{ \pi}} e^{-\frac{z^2}{2}} +z \text{erf} \left ( \frac{z}{\sqrt{2}} \right) ~~&\text{and} ~~\text{erf}(y)  = \frac {2}{\sqrt \pi }  \int_0^{y } e^{-s^2} ds.
\end{align}
As shown in Appendix \ref{app:max_pde}, $f$ solves %\footnote {Note this equation is equivalent to $\partial_z (e^{z^2/2} f'(z)) - e^{z^2/2} f(z) = 0$ with the same boundary condition, which is a Sturm-Liouville linear ordinary differential equation.}
$f(z) =  f''(z) + z f'(z)$ with  $\lim_{|z| \rightarrow \infty} \frac {f(z)}{ |z|}=1$. Therefore, $g(x,t)$ solves the 1D linear heat equation on $\mathbb R \times \mathbb R_{<0}$:
$ g_t + \kappa g_{xx} =0$ with $g(x,0) =|x|$.  We define $\psi$ globally in a uniform manner using ranked coordinates given in Section \ref{sec:heat_fixed}, and verify the following Claim in Appendix \ref{app:max_pde}.  
\begin{claim} 
\label{cl:max_pde_soln}
Equation \eqref{eq:max_pde} has an explicit classical solution on $\mathbb R^N \times \mathbb R_{<0}$, namely
 \begin{align} 
 \label{eq:max_pde_soln}
\psi(x,t) &= \frac {1}{N} \sum_i x_{(i)} + \sqrt{-2\kappa t } \sum_{l=1}^{N-1} c_l f(z_l) 
\end{align}
where $ z_l = \frac {1}{\sqrt {-2\kappa t}} \left ( \left (\sum_{n=1}^l{x_{(n)}} \right) - lx_{(l+1)} \right)$, $f$ is given by \eqref{eq:sturm_ode_soln} and $c_l = \frac{1}{l(l+1)}$.
\end{claim}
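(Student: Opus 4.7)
The plan is to verify three things: (i) the initial condition $\psi(x,0) = \max_i x_i$, (ii) the PDE pointwise on the open set where all coordinates are distinct, and (iii) the $C^2$ regularity across the diagonal hyperplanes $\{x_i = x_j\}$. Since the formula is written in ranked coordinates it is automatically symmetric in $x$, so for (ii) it suffices to work on one ordering region, say $\Omega = \{x_1 > x_2 > \cdots > x_N\}$, on which $x_{(i)} = x_i$.

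On $\Omega$ I first reduce the $t$-derivatives to heat-type computations for the one-dimensional profile. The functions $g_l(x,t) := \sqrt{-2\kappa t}\,f(z_l)$ are, by the ODE $f(z) = f''(z) + z f'(z)$ verified in Appendix \ref{app:max_pde}, classical solutions of the 1D linear heat equation in the scalar argument $y_l := \sum_{n=1}^l x_n - l\,x_{l+1}$; a direct calculation gives $\partial_t g_l = -\kappa f''(z_l)/\sqrt{-2\kappa t}$. The chain rule then yields $\partial_{x_i}^2 g_l = f''(z_l)/\sqrt{-2\kappa t}$ for $i \le l$, $\partial_{x_{l+1}}^2 g_l = l^2 f''(z_l)/\sqrt{-2\kappa t}$, and zero otherwise. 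Summing against $c_l = 1/(l(l+1))$ and using the telescoping identity $\sum_{l=1}^{j-1} c_l = (j-1)/j$ produces
\[
\partial_{x_1}^2 \psi - \partial_{x_j}^2 \psi \;=\; \frac{1}{\sqrt{-2\kappa t}} \sum_{l=1}^{j-1} c_l \bigl( f''(z_l) - f''(z_{j-1}) \bigr).
\]

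To show this expression is nonnegative, so that $\partial_{x_1}^2 \psi$ attains the maximum, I need two ingredients. Differentiating the explicit formula for $f$ gives $f''(z) = \sqrt{2/\pi}\, e^{-z^2/2}$, which is positive and strictly decreasing in $|z|$. And on $\Omega$ the sequence $z_l$ is nondecreasing in $l$: using the identity $(l+1)\bar x_{l+1} - l\,\bar x_l = x_{l+1}$ (with $\bar x_l$ the average of the top $l$ coordinates), a direct subtraction yields $z_{l+1} - z_l = (l+1)(x_{l+1} - x_{l+2})/\sqrt{-2\kappa t} \ge 0$. Since also $z_l \ge 0$, this gives $f''(z_l) \ge f''(z_{j-1})$ for $l \le j-1$, so $\partial_{x_1}^2 \psi = \max_i \partial_{x_i}^2 \psi$. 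The PDE then collapses to $\psi_t + \kappa\,\partial_{x_1}^2 \psi = 0$, which holds by direct substitution of the expressions already computed for $\psi_t$ and $\partial_{x_1}^2 \psi$.

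For the initial condition I use the asymptotic $f(z) - |z| \to 0$: as $t \to 0^-$ each $z_l$ either stays $0$ or tends to $+\infty$, and in both cases $\sqrt{-2\kappa t}\,f(z_l) \to y_l$; rearranging $\sum_l c_l y_l$ by the telescoping identity $\sum_{l=n}^{N-1} c_l = 1/n - 1/N$ collapses it to $x_{(1)} - (1/N)\sum_i x_{(i)}$, and adding the linear part leaves $\psi(x,0) = x_{(1)}$. The main obstacle is (iii): the individual $y_l$, and hence the $g_l$, are only Lipschitz across the diagonals, but the weights $c_l = 1/(l(l+1))$ are tuned precisely so that the nonsmooth contributions cancel in the combination $\sum_l c_l g_l$; together with the evenness of $f$, matching one-sided first and second derivatives across each diagonal then delivers the required $C^2$ regularity, and the detailed bookkeeping is relegated to Appendix \ref{app:max_pde}.
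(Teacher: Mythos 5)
Your proposal is correct and follows essentially the same route as the paper: work in the ordered sector, compute the pure second derivatives via the $g_l$ building blocks, use the telescoping identity for $c_l$ together with the monotonicity of $z_l$ and the decay of $f''$ to show $\partial^2_{(1)}\psi$ is the maximal diagonal entry, and then the 1D heat relation for each $g_l$ (via $f = f'' + zf'$) makes $\psi_t + \kappa\,\partial^2_{(1)}\psi = 0$ drop out; the initial condition follows from $f(z)\sim|z|$ and the same telescoping. The one place you wave your hands is (iii): you correctly identify the mechanism (the choice $c_l = 1/(l(l+1))$ makes the normal derivative of $\sum_l c_l g_l$ vanish at each face $x_k = x_{k+1}$ — for $k=1$ because $f'(0)=0$, and for $k\ge 2$ because $(k+1)c_k - (k-1)c_{k-1} = 0$ — after which a symmetric-extension calculus lemma upgrades this to $C^2$), but you state it as an idea and defer the verification rather than carrying it out, so that sub-step remains a sketch rather than a proof.
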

Since $z_l$ does not change when a multiple of $\mathbb 1$ is added to $x$, we have $\psi(x+ c \mathbb 1, t)=\psi(x, t)+c$.  Therefore, $\psi$ satisfies \eqref{eq:pde_boundary_lb} and \eqref{eq:pde_boundary_ub}.

Appendix \ref{app:max_lb} shows that  $\langle D^2 \psi \cdot q^m,  q^m \rangle  =4 \max_j \partial_{jj} \psi$. Therefore, $u^m$ given by $\psi$ with $\kappa = 2$ satisfies \eqref{eq:pde_ineq_lb} for the adversary $a^m$.  Also both inequalities in \eqref{eq:pde_fixed_lb}  are satisfied with equalities.  Therefore, similarly to the discussion of $u^r$ and $u^h$ in Section \ref{sec:heat_fixed}, $u^m$ has the same leading order term as $v_{a^m}$. The resulting lower bound is given in Example \ref{ex:max_bound}. 
 
To determine an upper bound, in Appendix \ref{app:max_kappa}, we prove that $\frac{1}{2}\max_{q \in [-1,1]^N} \langle D^2\psi \cdot q, q \rangle  \leq \kappa_m \max_i \partial_i^2 \psi$ for   
\begin{align}
\kappa_m= \frac{N^2}{2(N-1)} ~\text{if}~ N~\text{is even} ~~~\text{or} ~~~ \frac{N+1}{2} ~ \text{if}~ N~\text{is odd}
\label{eq:lower_bound_max_factor}
\end{align}
Also in Appendix \ref{app:Derivatives of the max-based potential} we show $\partial_{i} \psi \geq 0$ for all $i$ in $[N]$. Therefore, an upper bound potential $w^{m}$ given by $\psi$ with $\kappa =\kappa_m$ satisfies \eqref{eq:pde_fixed_ub} and is associated with the following strategy (\textit{max player}).
 \begin{tcolorbox}
 The \textit{max player} $p^m$  selects $p^m_t = \nabla w^m(x,t+1)$ at  $t < -1$ and   an arbitrary $p_{-1} \in  \Delta_N$ at $t = -1$.
 \end{tcolorbox}
 
Since the formula \eqref{eq:max_pde_soln} for $\psi$ uses ranked coordinates, particular scrutiny is needed on the boundaries where the ranking changes. The calculation in Appendix \ref{app:maxC2}  reveals that the third-order spatial derivatives do not exist on those boundaries.
%Since $\psi$ is constructed by reflection of a smooth function whose first derivatives normal to the reflection boundary vanish, its third spatial derivatives are bounded almost everywhere on $\mathbb R^N$ but are discontinuous at the reflection boundary. 
Therefore, Proposition \ref{rem:fixed_lb_lipschitz} is not available in this setting.

Let $E_{l.b.}^\psi$ denote the error term within the meaning of Theorem \ref{thm:fixed_lb} for $\psi$ with  $\kappa=2$ and  the associated adversary $a^m$. Appendix \ref{app:lb_err_max} shows that  $E_{l.b.}^\psi(t)= O( N  \log |t|)$. Let $E_{u.b.}^\psi$ denote the ``error" term within the meaning of Theorem \ref{thm:fixed_ub} for $\psi$ with $\kappa=\kappa_m$. Appendix \ref{app:ub_err_max} shows that $E_{u.b.}^\psi(t) =O( N  \log |t|)$ as well.\footnote{While the asymptotic notation is used here for conciseness as well, the Appendix provides explicit error bounds.}

 \begin {example} [Max-based strategies] 
\label{ex:max_bound}
The value function $v_{a^m}$ of $a^m$ satisfies the lower bound $u^m(x,t) - E_{l.b.}^\psi(t) \leq v_{a^m}(x,t)$   and  the value function  $v_{p^m}$ of $p^m$ satisfies the  upper bound  $v_{p^m}(x,t) \leq w^m(x,t)  + E_{u.b.}^\psi(t)$,  where $u^m$ and $w^m$ are the potentials defined above. Also $u^m$ has the same leading order term in $t$ as $v_{a^m}$. Since $\psi(0,T) =\frac {2(N-1)}{N}  \sqrt {\frac {\kappa}{\pi} |T|} $, the regret satisfies the  bounds $ \frac {2(N-1)}{N}  \sqrt {\frac {2}{\pi} |T|}  - E_{l.b.}^\psi(T) \leq  \min_p R_T(a^m,p)$ and $\max_a R_T(a,p^m) \leq  \frac {2(N-1)}{N}\sqrt {\frac {\kappa_m}{\pi} |T|}  + E_{u.b.}^\psi(T)$. 
\end{example}
The lower and upper bounds have the matching leading order term of  $\sqrt {\frac {2}{\pi} |T|}$ and $4\sqrt {\frac {2}{9 \pi} |T|}$ for, respectively, two and three experts. Therefore, the corresponding strategies are minmax optimal asymptotically in $T$.  The same leading order constant for three experts was determined in \citet{abbasi} (after rescaling for our $[-1,1]^N$ loss function) with an $O(\log^2 |T|)$ error term.   Our method, however, reduces the error to $O(\log |T|)$.
 
\section {Related Work} 

In this Section, we first describe the relationship of our potentials to the PDE characterizing minimax optimal value. Second, we compare our bounds with the previously known ones.
 
\subsection {PDE Characterizing Minimax Optimal Value} 
\label{sec:relationship_to_dk}

The fact that our bounds for $N =2, 3$ match asymptotically can be understood from a PDE perspective. Indeed, our upper and lower-bound heat and max potentials for $N=2$ are the same. Our upper and lower-bound max potentials for $N=3$ are the same as well.  They all solve the PDE derived as in \cite{drenska2019prediction} that, as noted earlier, characterizes the asymptotically optimal result. This observation can also be found in \cite{bayraktar2019b} (for $N=4$, however, the solution of the relevant PDE is different from our potentials).

 \citet{drenska2019prediction} showed that, for any fixed $N$, the leading order term of the minimax value  function is the unique viscosity solution of the associated nonlinear PDE. Although the $\{0,1\}^N$ adversary in that reference is different from our $[-1,1]^N$ adversary, this is not consequential. Thus, the relevant PDE, as adjusted for our adversary, is the following:
\begin{align}
\label{eq:drenska_pde}
\begin{cases}
v_t + \frac{1}{2} \max_{q \in [-1,1]^N} \langle D^2 v \cdot q, q \rangle =0 \\
v(x,0) = \max_i x_i
\end{cases}
\end{align}
Since for an arbitrary $N$ the solution $v$ is not known explicitly, the PDE \eqref{eq:drenska_pde} does not provide a numerical estimate of the regret; moreover it only describes the leading order behavior as $|T| \rightarrow \infty$.  Our framework, by contrast, gives explicit upper and lower bounds, which hold for any $T$. 

While our framework does not use the PDE \eqref{eq:drenska_pde}, it is not unrelated.  Indeed, since a lower bound potential $u$ must satisfy \eqref{eq:pde_fixed_lb}, it has $\max_{q\in[-1,1]^N} \langle D^2u \cdot q, q \rangle \geq  \mathbb E_{q\sim a_t} \langle D^2u \cdot q, q \rangle$. Therefore, $u$ is a so-called \textit{subsolution} of \eqref{eq:drenska_pde}. Since these PDEs have a comparison principle,   $u \leq v$.  Similarly, an upper bound potential $w$ given by a solution of \eqref{eq:pde_fixed_ub} is a \textit{supersolution} of \eqref{eq:drenska_pde}, which implies $v \leq w$.

While the preceding remarks provide insight about why our potentials work, they rely upon the comparison principle for viscosity solutions of  \eqref{eq:drenska_pde} -- a result which is by no means elementary.    Our arguments (which build on the insight of \citet{rokhlin}) are, by contrast, entirely elementary, using little more than Taylor expansion.   (Our overall framework, presented in Appendices \ref{app:thm_fixed_lb} and \ref{app:thm_fixed_ub}, resembles a ``verification argument" from  optimal control theory.)

\subsection {Relationship to Existing Bounds}
\label{sec:related_work}
\begin{figure}
  \centering
    {\includegraphics[width=0.49\linewidth]{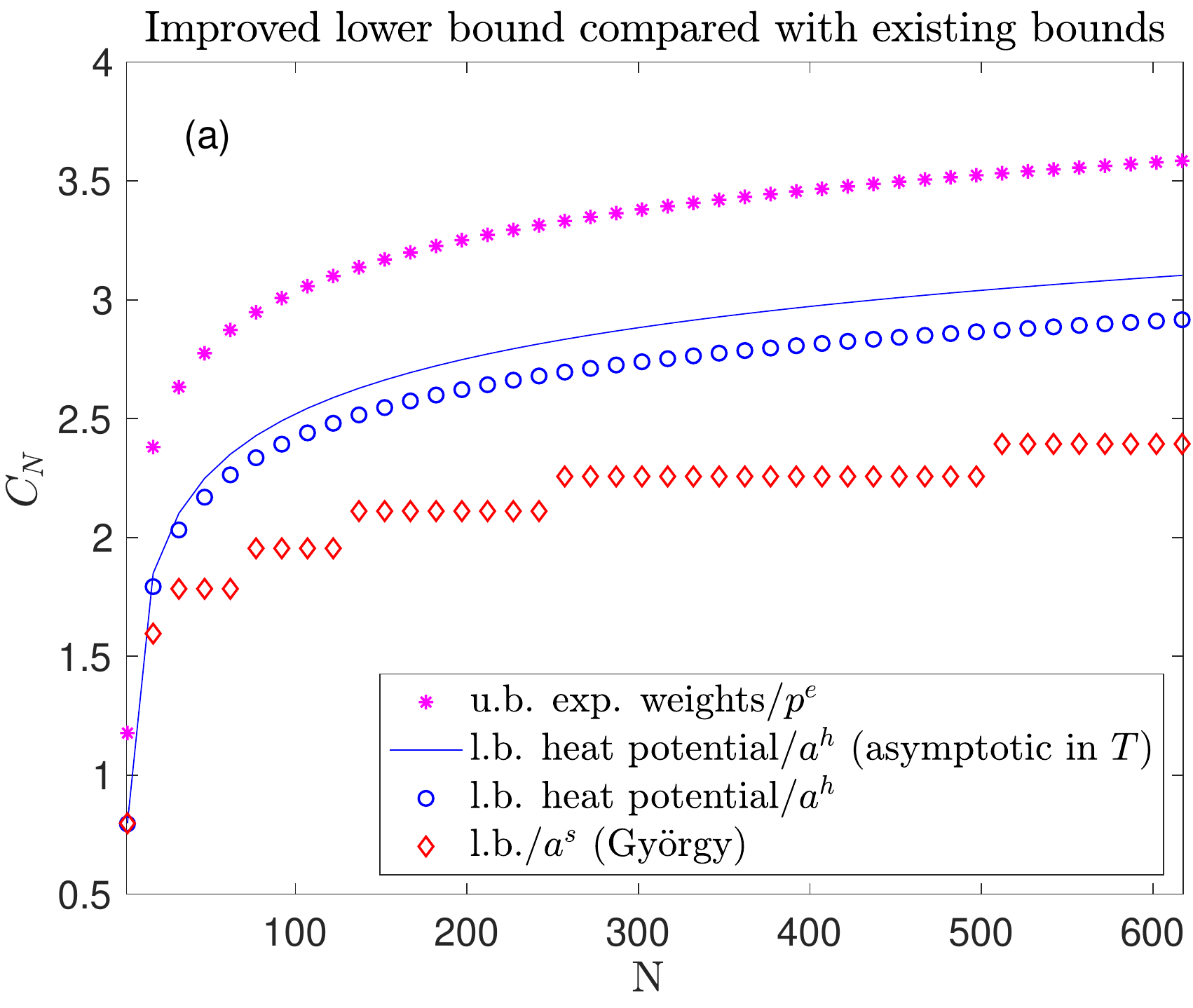}
   \phantomsubcaption\label{fig:l}}
  {\includegraphics[width=0.49\linewidth]{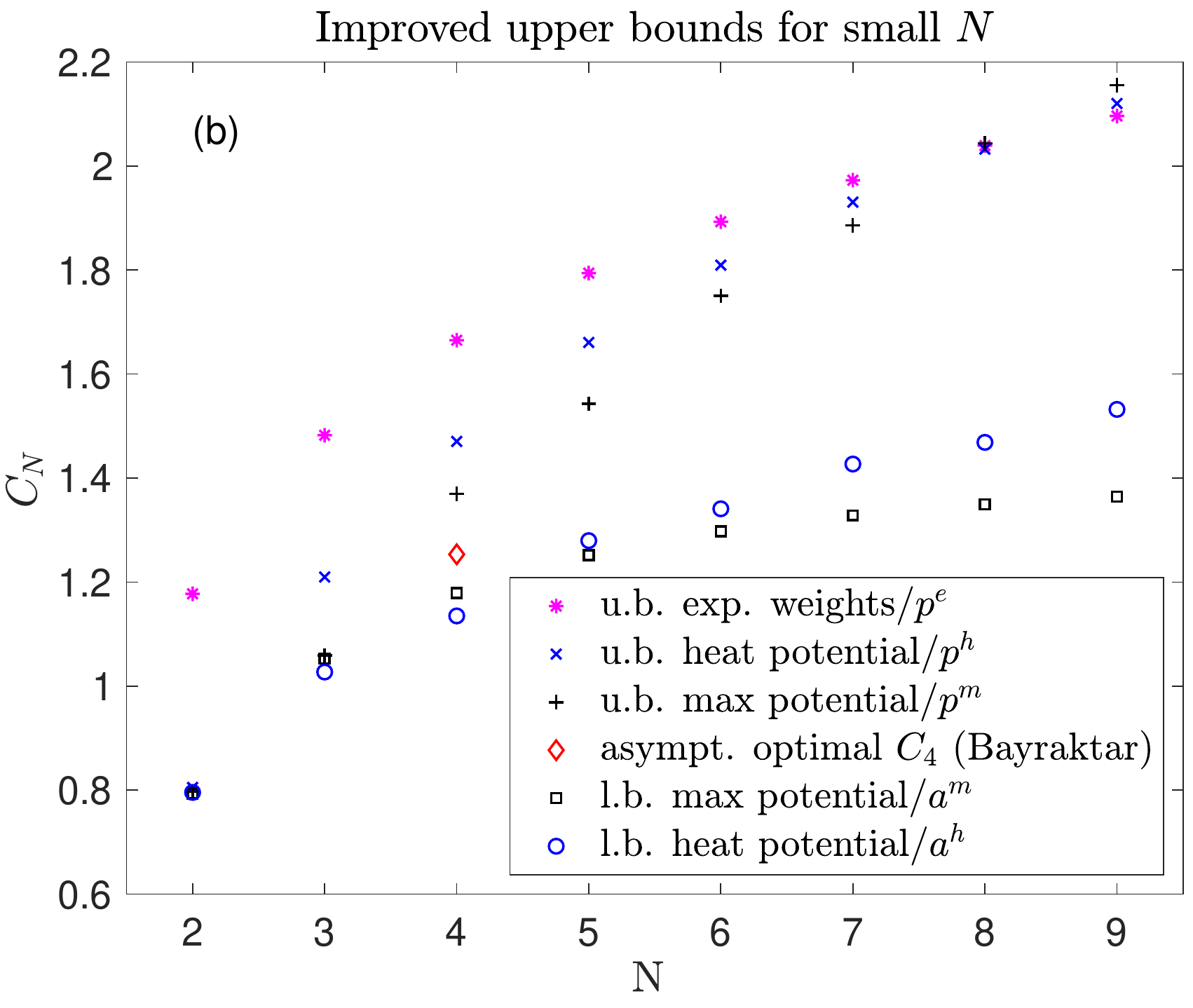}
   \phantomsubcaption\label{fig:2}}
  \caption{ Plots of $C_N$ with $N$ where $C_N = (u(0, T) - E(T))/\sqrt {|T|}$ for a lower bound (l.b.) potential $u$ and the associated adversary $a$  (the resulting l.b. is $C_N \sqrt {|T|} \leq \min_p R_T(a,p)$) and $C_N$  equal to $(w(0, T) + E(T))/\sqrt {|T|}$ for an upper bound (u.b.) potential $w$ and the corresponding player $p$ (the resulting u.b. is  $\max_a R_T(a,p) \leq C_N \sqrt {|T|}$). Each $C_N$ is determined for $T = -10^7$ except where it is specified to be asymptotic in $T$. Plot \subref{fig:l} compares the improved l.b. with previously known l.b. and u.b's and plot \subref{fig:2} shows the improved u.b. for small $N$ (the exponential weights u.b. various l.b.'s and the asymptotically optimal $C_4$ are also plotted for reference).}
  \label{fig:CN}
\end{figure}

Note that $\kappa_h$ is strictly larger than $\frac {1}{2}$ for any given $N$.  Therefore, asymptotically in $T$, the lower bound attained by our heat adversary $a^h$ is tighter than the one attained by the classic randomized adversary $a^r$.

When $N$ and $T$ are fixed, a bound obtained using $a^r$ is provided by  in \citet{orabona}; their argument involves lower bounding the maximum of $N$ independent symmetric random walks of length $|T|$.  Another lower bound is given in Chapter 7 of \cite{gyorgy} for an adversary $a^s$ constructed from a single random walk of length $|T|$.   This $a^s$ provides a tighter lower bound than our $a^h$ when $|T|$ is relatively small. However, as illustrated by Figure \ref{fig:CN}\subref{fig:l}, when $|T|$ is large, our strategy $a^h$ improves on the lower bound obtained using $a^s$. (The lower bound given by \citet{orabona} is not shown because its value is negative for the given $T$ and range of $N$.)   

Turning to the upper bounds: when $N$ is small and $|T|$ is large, as illustrated by Figure \ref{fig:CN}\subref{fig:2}, the max player $p^m$ improves on the upper bound given by the exponential weights $p^e$. (The heat player $p^h$  also improves on $p^e$ in this setting.) See Appendix \ref{app:numerics} for details regarding the numerical computation of these bounds. 

\section{Conclusions}

We establish that potentials can be used to design effective strategies leading to lower bounds as well as upper bounds. We also provide a scheme by which solutions of well-chosen PDEs can be used as upper bound or lower bound potentials. The resulting bounds improve in some cases upon the previously known bounds. 

While this paper focuses on the fixed horizon version of the expert problem, \cite{kobzar_geom} extends our framework to the \textit{geometric stopping} version, where the final time is not fixed but is rather random, chosen from the geometric distribution.  

% Acknowledgments---Will not appear in anonymized version
\acks{V.A.K and R.V.K. are supported, in part, by NSF grant DMS-1311833. V.A.K. is also supported by the Moore-Sloan Data Science Environment at New York University.}

\bibliography{expert_bounds_article}

\appendix

\section{Proof of Theorem \ref{thm:fixed_lb}}
\label{app:thm_fixed_lb}

Since $v_a$ is characterized by the dynamic program \eqref{eq:va_dp}, we show  that $u(x,t)- E(t) \leq v_a(x,t)$ by induction starting from the final time.  The initial step follows from the inequality between $v_a$ and $u$ at $t=0$.  To prove the inductive step, as a preliminary result, we bound below  $\min_{p} \mathbb E_{a_t, p} ~[u  ( x + r,  t +1)]- u  ( x ,  t ) $ in terms of $C$ and $K(t)$.  At $t=-1$, the conditions of the theorem already provide: 
\begin{align*} 
 \min_{p} ~ \mathbb E_{a_{-1}, p} ~[u  ( x + r,  0)]  - u(x,-1) \geq - C
\end{align*}
For $t \leq -2$, we note that $r = q_I \mathbb 1 -q$ and use the linearity of $u$ in the direction of  $\mathbb 1$: 
\begin{align*}
&\min_{p} ~ \mathbb E_{p,a_t} ~[u  ( x + r,  t +1)] - u(x,t)  \\
&=\min_{p} ~ \mathbb E_{p,a_t} ~[u  ( x - q,  t +1) + q_I]  - u  ( x ,  t +1) +  u  ( x ,  t+1 ) - u(x,t) 
 \end{align*}
Since $u(\cdot,t+1)$ is $C^2$ with Lipschitz continuous second-order derivatives in $x$, we use Taylor's theorem with the integral remainder 
\begin{align}
\label{eq:spatial_taylor_mv}
u(x-q,t+1)= &u(x,t+1)-\nabla u(x,t+1)\cdot q+\frac{1}{2}\langle D^2 u(x,t+1)\cdot q,q\rangle \nonumber\\
&-\int_0^1 D^3u(x-\mu q,t+1)[q,q,q]\frac{(1-\mu)^2}{2}d\mu
\end{align}
Thus,
\begin{align}
\label{eq:spatial_taylor_bound}
u(x-q,t+1)-u(x,t+1) +q_I \geq&q_I -\nabla u(x,t+1)\cdot q+\frac{1}{2}\langle D^2 u(x,t+1)\cdot q,q\rangle \nonumber\\
&- \frac {1}{6} \text{ess sup}_{y  \in [x, x-q]}  D^3u( y, t+1) [q,q,q] 
\end{align}
Similarly, $
u  ( x , t+1 ) - u(x,t)\geq  u_{t}(x, t+1 ) - \frac {1}{2}  \text{ess sup}_{\bar t \in [t, t+1]} u_{tt}(x,\bar t)$.

The rules of the game provide that $q$ distributed according to $a_t$ and $I$ distributed according to $p_t$ are independent conditioned on history. Therefore, $\mathbb E_{p,a_t}  [q_I - \nabla u(x,t+1)\cdot q] = \langle p - \nabla u(x,t+1), \mathbb E_{a_t} q \rangle = 0$ for all $p$ since $a_t$ is balanced and  $\sum_{i} \partial_i u =1$ by linearity of $u$ along $\mathbb 1$. As a result, we can eliminate the dependence on $p$.  Also we note the condition on the potential \eqref{eq:pde_ineq_lb}.

Using the foregoing results and the definition of $K$, we obtain
\begin{align}
\min_{p} ~ \mathbb E_{p,a_t} ~[u  ( x + r,  t +1)] - u(x,t)\geq -K(t)=E(t+1)-E(t) 
\label{eq:lb_error}
\end{align}

Finally, using \eqref{eq:lb_error}, the inductive hypothesis $u(x+ r ,t+1) -E(t+1) \leq v_a(x+ r ,t+1)$, and the dynamic program formulation of $v_a$ in \eqref{eq:va_dp}, we obtain
\begin{align*}
 u  ( x, t)  - E(t) &\leq  u(x, t) +  \min_{p} \mathbb E_{p,a_t} ~u  ( x + r,  t +1) - u(x,t) - E(t+1) \\
& \leq \min_{p}  ~ \mathbb E_{p,a_t}  [v_a(x+ r,t+1)] =v_a ( x,  t) 
\end{align*}

\section{Proof of Proposition \ref{rem:fixed_lb_lipschitz}}
\label{app:fixed_lb_lipschitz}
If $D^3 u(\cdot,t+1)$ exists and is Lipschitz continuous, then \eqref{eq:spatial_taylor_mv} can be replaced by
\begin{align*}
u(x-q,t+1)=&u(x,t+1)-\nabla u(x,t+1)\cdot q+\frac{1}{2}\langle D^2 u(x,t+1)\cdot q,q\rangle \\
&-\frac{1}{6} D^3 u(x,t+1) [q,q,q]+\int_0^1 D^4u(x-\mu q,t+1)[q,q,q,q]\frac{(1-\mu)^3}{6}d\mu
\end{align*}
and in such case \eqref{eq:spatial_taylor_bound} is replaced by 
\begin{align*}
u(x-q,t+1)-u(x,t+&1) +q_I \geq q_I -\nabla u(x,t+1)\cdot q+\frac{1}{2}\langle D^2 u(x,t+1)\cdot q,q\rangle \\
&-\frac{1}{6} D^3 u(x,t+1) [q,q,q]+ \frac {1}{24} \text{ess sup}_{y  \in [x, x-q]}  D^4u( y, t+1) [q,q,q,q] 
\end{align*}
Since the adversary $a$ is symmetric, $q$ has the same distribution as $-q $. Therefore, $\mathbb E_{a_t}q_iq_jq_k =- \mathbb E_{a_t} q_iq_jq_k$, for any $i$, $j$, and $k$. This implies $\mathbb E_{a_t}q_iq_jq_k =0$ and consequently $\mathbb E_{a_t}D^3u(x,t+1)[q,q,q]=0$.  The remainder of the proof of Theorem \ref{thm:fixed_lb} is the same except that we use the definition of $K$ given in this Proposition.

\section{Proof of Theorem \ref{thm:fixed_ub}}
\label{app:thm_fixed_ub}

Since $v_p$ is characterized by the dynamic program \eqref{eq:vp_dp}, we show by induction that $v_p(x,t) \leq w(x,t)+ E(t) $.   The initial step follows from the inequality between $v_p$ and $w$ at $t=0$, and the rest of the proof is similar to the oroof of Theorem \ref{thm:fixed_lb}.  To prove the inductive step, we note that $\max_{a} ~ \mathbb E_{ p_{-1}, a} ~[w  ( x + r,  0)]  - w(x,-1) \leq C$. For $t \leq -2$, we again note that $r = q_I \mathbb 1 -q$ and use the linearity of $w$ in the direction of  $\mathbb 1$: 
\begin{align}
&\max_{a} ~ \mathbb E_{p_t,a} ~[w  ( x + r,  t +1)] - w(x,t) \nonumber \\
&=\max_{a} ~ \mathbb E_{a_t} ~[w  ( x - q,  t +1)+p_t\cdot q ] - w  ( x ,  t +1) +  w  ( x ,  t+1 ) - w(x,t)
\label{eq:diff_1}
 \end{align}
The equality above also uses the fact that under the rules of the game,  $q$ distributed according to $a_t$ and $I$ distributed according to $p_t$ are independent, conditionally on history.
Since $w(\cdot,t+1)$ is $C^2$ with Lipschitz continuous second order derivatives, we again use Taylor's theorem with the integral remainder 
\begin{align}
w(x-q,t+1)=&w(x,t+1)-\nabla w(x,t+1)\cdot q+\frac{1}{2}\langle D^2 w(x,t+1)\cdot q,q\rangle   \nonumber\\
&-\int_0^1 D^3w(x-\mu q,t+1)[q,q,q]\frac{(1-\mu)^2}{2}d\mu \label{eq:diff_2}
\end{align}
The fact that $p_t=\nabla w(x,t+1)$ provides that $p_t\cdot q-\nabla w(x,t+1)\cdot q =0$ for all $q$.  Thus
\begin{align*}
w(x-q,t+1)+ p_t\cdot q - w(x,t+1)\leq & \frac{1}{2}\langle D^2 w(x,t+1)\cdot q,q\rangle\\
&- \frac {1}{6} \text{ess inf}_{y  \in [x, x-q]}  D^3w( y, t+1) [q,q,q] 
\end{align*}
Similarly, 
\begin{align}
w  ( x , t+1 ) - w (x,t)\leq  w_{t}(x, t+1 ) - \frac {1}{2}  \text{ess inf}_{\tau \in [t, t+1]} w_{tt}(x,\tau)
 \label{eq:diff_3}
\end{align}
Also we note the following condition on the potential \eqref{eq:pde_ineq_ub}. 
By collecting the above inequalities and using the definition of $K$, 
\begin{align}
\max_{a} ~ \mathbb E_{p_t,a} ~[w  ( x + r,  t +1)] - w(x,t)\leq K(t)=E(t)-E(t+1) 
 \label{eq:diff_4}
\end{align}

Using the inequality \eqref{eq:diff_4}, the inductive hypothesis $w(x+ r ,t+1) +E(t+1) \geq v_p(x+ r ,t+1)$, and the dynamic program formulation of $v_p$ in \eqref{eq:vp_dp}, we obtain
\begin{align*}
 w  ( x, t)  + E(t) &\geq  w(x, t) +  \max_{a} \mathbb E_{p_t,a} ~w  ( x + r,  t +1) - w(x,t) + E(t+1) \\
& \geq \max_{a}  ~ \mathbb E_{p_t,a}  [v_p(x+ r,t+1)] =v_p ( x,  t) 
\end{align*}

\section{Proof of Proposition \ref{rem:constant_w_t}}
\label{app:constant_w_t}
By  definition, $w$ is twice differentiable in $x$ for all $x$ and $t < 0$. Then, the form $w(x,t) =  \Phi(x)+ct$,  implies that $w$ is so differentiable for all $t$. Therefore, we bound \eqref{eq:diff_1} using a Taylor expansion starting at $t=-1$, rather than $t=-2$. In this case, it suffices to show that $K(t) = 0$ for all $T \leq t \leq -1$. Noting that  $D^2 w(x,t) =D^2 \Phi(x)$, we use Taylor's theorem with the mean value form of the second-order (in $x$) remainder. Thus, \eqref{eq:diff_2} is replaced by 
\[
w(x-q,t+1)=w(x,t+1)-\nabla w(x,t+1)\cdot q+\frac{1}{2}\langle D^2 \Phi(y)\cdot q,q\rangle
\]
for $y = x- \mu q$ and some $\mu \in [0,1]$.  Since $w_t$ is constant, \eqref {eq:diff_3} is replaced by $w  ( x , t+1 ) - w (x,t)= w_t = c$. Therefore, \eqref{eq:diff_4} is replaced by $\max_{a} ~ \mathbb E_{p_t,a} ~[w  ( x + r,  t +1)] - w(x,t)\leq 0$.  The rest of the proof of Theorem \ref{thm:fixed_ub} is the same; it reveals that  $w  ( x, t) \leq   v_p ( x,  t)$ for all $T \leq t \leq -1$ and all $x$, as desired.
\section{Hessian of the Exponential Weights Potential}
\label{app:exp_ub}
 By a standard result, $\Phi$ is convex.\footnote{See, e.g., Sec. 3.1.5 in  \cite{boyd_cvx_book}.} Therefore, $D^2 \Phi$ is a positive semidefinite matrix, and its quadratic form $ \langle D^2 \Phi \cdot q, q \rangle$ is maximized at one of the extreme points $\{ \pm 1\}^N$. 
 Note that 
\begin{align*}
\partial_{ij} \Phi(x, t)= 
\begin{cases}
\psi''(y)\phi'(x_i) \phi'(x_j) &\text {if}~  i \neq j \\
\psi''(y)\phi'(x_i)^2+ \psi'(y) \phi''(x_i) &\text {if}~  i = j \\
\end{cases} 
\end{align*}
where
\begin{align*}
&y = \sum_{k=1}^N \phi(x_k),~ \psi(y) = \frac{1}{\eta} \log(y), ~\psi'(y) = \frac{1}{\eta y}, ~ \psi''(y) = -\frac{1}{\eta y^2},\\
 &\phi(x_k) =  e^{\eta x_k}, ~ \phi'(x_k) =  \eta e^{\eta x_k} ~ \text{and}~ \phi''(x_k) =  \eta^2 e^{\eta x_k}
\end{align*}
%Observe that $D^2\Phi (x+a\mathbb 1) = D^2\Phi (x)$, and  $\sum_j \partial_{ij} \Phi = 0$ (and thus, $D^2\Phi \cdot \mathbb 1=0$) by linearity of $w$ in the direction of $\mathbb 1$. 
Using these results, for all $q \in \{\pm\}^N$
\begin{align}
\langle D^2\Phi \cdot q, q \rangle &= -\eta \left (\sum_{k=1}^N e^{\eta x_k} \right)^{-2} \sum_{i,j} e^{\eta x_i} e^{\eta x_j} q_i  q_j + \eta = \eta - \eta \langle p^e, q \rangle^2  \leq  \eta \nonumber \label {eq:second_deriv_bound} 
\end{align}

\section{Heat Potential Error Terms}
\label{app:heat_bounds} 

In this Appendix, we compute the error terms for the heat potential $\varphi$ given by \eqref{eq:heat_potential}.  As a preliminary result, in Appendix \ref{app:heat_derivs}, we compute the spatial derivatives of $\varphi$ up to the 4th order and determine their sign. In Appendix \ref{app:lb_err}, we determine the lower bound error term $E^{\varphi}_{l.b.}$  for an arbitrary adversary supported on $\{ \pm 1\}^N$. Since $\varphi$ is smooth, we use Proposition \ref{rem:fixed_lb_lipschitz} for purposes of the lower bound.   Finally, in Appendix \ref{app:ub_err}, we  determine the upper bound error term $E^{\varphi}_{u.b.}$.

\subsection{Spatial Derivatives of the Heat Potential}
\label{app:heat_derivs}

 Note that $\max _ k (x_k-y_k)$ is differentiable almost everywhere and
\begin{align*}
\partial_i\max_k (x_k-y_k) =  
\begin {cases}
1 ~ \text{if}~ x_i-y_i>\max _ {j\neq i} (x_j-y_j)\\
0 ~\text{if}~ x_i-y_i<\max _ {j\neq i} (x_j-y_j)
\end {cases}
\end{align*}
Therefore, the first derivatives are
\begin{align*}
\partial_i \varphi=\alpha\int e^{-\frac{\|y\|^2}{2
\sigma^2}}\mathbb{1}_{x_i-y_i>\max_{j\neq i}x_j-y_j}dy
= \alpha\int e^{-\frac{\|x-y\|^2}{2
\sigma^2}}\mathbb{1}_{y_i>\max_{j\neq i}y_j}dy\geq 0
\end{align*}
and the second pure derivatives are
\begin{align*}
\partial_{ii} \varphi=&-\frac{\alpha}{\sigma^2}\int e^{-\frac{\|x-y\|^2}{2
\sigma^2}}(x_i-y_i)\mathbb{1}_{y_i>\max_{j\neq i}y_j}dy =-\frac{\alpha}{\sigma^2}\int e^{-\frac{\|y\|^2}{2
\sigma^2}}y_i\mathbb{1}_{x_i-y_i>\max_{j\neq i}x_j-y_j}dy\\
=&-\frac{\alpha}{\sigma^2}\int_{\mathbb{R}^{N-1}}e^{-\frac{\sum_{j\neq i}y_j^2}{2
\sigma^2}}\int_{-\infty}^{x_i-\max_{j\neq i}x_j-y_j}e^{-\frac{y_i^2}{2\sigma^2}}y_idy_id\hat{y}_i
\end{align*}
where $\hat y_i$ is a vector in $\mathbb R^{N-1}$ containing the same components as $y \in \mathbb R^{N} $ except $y_i$.  Since $\int_{-\infty}^{x_i-\max_{j\neq i}x_j-y_j}e^{-\frac{y_i^2}{2\sigma^2}}y_idy_i<0$, we have $\partial_{ii} \varphi>0$. \\

The second mixed derivatives are
\begin{align*}
\partial_{ij} \varphi=&-\frac{\alpha}{\sigma^2}\int e^{-\frac{\|x-y\|^2}{2
\sigma^2}}(x_j-y_j)\mathbb{1}_{y_i>\max_{k\neq i}y_k}dy=-\frac{\alpha}{\sigma^2}\int e^{-\frac{\|y\|^2}{2
\sigma^2}}y_j\mathbb{1}_{x_i-y_i>\max_{k\neq i}x_k-y_k}dy\\
=&-\frac{\alpha}{\sigma^2}\int_{\mathbb{R}^{N-1}}e^{-\frac{\sum_{k\neq j}y_k^2}{2
\sigma^2}}\mathbb{1}_{x_i-y_i>\max_{k\neq i,j}x_k-y_k}\int_{x_j-x_i+y_i}^{\infty}e^{-\frac{y_j^2}{2
\sigma^2}}y_jdy_jd\hat{y}_j
\end{align*}
Since $\int_{x_j-x_i+y_i}^{\infty}e^{-\frac{y_j^2}{2 \sigma^2}}y_jdy_j>0$, we have $\partial_{ij}\varphi<0$.  \\

The third derivatives are
\begin{align*}
\partial_{iii} \varphi=&-\frac{\alpha}{\sigma^2}\int e^{-\frac{\|x-y\|^2}{2
\sigma^2}} \left (1-\frac{(x_i-y_i)^2}{\sigma^2}\right )\mathbb{1}_{y_i>\max_{j\neq i}y_j}dy\\ 
=&-\frac{\alpha}{\sigma^2}\int e^{-\frac{\|y\|^2}{2
\sigma^2}} \left (1-\frac{y_i^2}{\sigma^2} \right)\mathbb{1}_{x_i-y_i>\max_{j\neq i}x_j-y_j}dy\\
\partial_{ijj} \varphi=&-\frac{\alpha}{\sigma^2}\int e^{-\frac{\|x-y\|^2}{2
\sigma^2}}\left (1-\frac{(x_j-y_j)^2}{\sigma^2} \right)\mathbb{1}_{y_i>\max_{k\neq i}y_k}dy\\
=&-\frac{\alpha}{\sigma^2}\int e^{-\frac{\|y\|^2}{2
\sigma^2}} \left (1-\frac{y_j^2}{\sigma^2} \right)\mathbb{1}_{x_i-y_i>\max_{k\neq i}x_k-y_k}dy
\end{align*}
when $i,j,k$ are all distinct (assuming $N \geq 3$),  
\begin{align*}
&\partial_{ijk} \varphi=\frac{\alpha}{\sigma^4}\int e^{-\frac{\|x-y\|^2}{2
\sigma^2}}(x_j-y_j)(x_k-y_k)\mathbb{1}_{y_i>\max_{l\neq i}y_l}dy\\
&=\frac{\alpha}{\sigma^4}\int e^{-\frac{\|y\|^2}{2
\sigma^2}}y_jy_k\mathbb{1}_{x_i-y_i>\max_{l\neq i}x_l-y_l}dy\\
&=\frac{\alpha}{\sigma^4}\int_{\mathbb{R}^{N-2}}e^{-\frac{\sum_{l\neq j,k}y_l^2}{2
\sigma^2}}\mathbb{1}_{x_i-y_i>\max_{l\neq i,j,k}x_l-y_l}\int_{x_j-x_i+y_i}^{\infty}e^{-\frac{y_j^2}{2
\sigma^2}}y_jdy_j\int_{x_k-x_i+y_i}^{\infty}e^{-\frac{y_k^2}{2
\sigma^2}}y_kdy_kd\hat{y}_{jk}
\end{align*}
Since $\int_{x_j-x_i+y_i}^{\infty}e^{-\frac{y_j^2}{2
\sigma^2}}y_jdy_j\int_{x_k-x_i+y_i}^{\infty}e^{-\frac{y_k^2}{2
\sigma^2}}y_kdy_k>0$, we have $\partial_{ijk} \varphi>0$.\\
where $\hat{y}_{jk}$ is a vector in $\mathbb R^{N-2}$ containing the same components as $y \in \mathbb R^{N} $ except $y_i$ and $y_j$.\\

The fourth derivatives  are
\begin{align*}
\partial_{iiii}\varphi=&\frac{\alpha}{\sigma^4}\int_{\mathbb{R}^N}e^{-\frac{\|x-y\|^2}{2
\sigma^2}}(x_i-y_i)\left (3-\frac{(x_i-y_i)^2}{\sigma^2}\right)\mathbb{1}_{y_i>\max_{j\neq i}y_j}dy\\ 
=&\frac{\alpha}{\sigma^4}\int_{\mathbb{R}^N}e^{-\frac{\|y\|^2}{2
\sigma^2}}y_i\left (3-\frac{y_i^2}{\sigma^2}\right )\mathbb{1}_{x_i-y_i>\max_{j\neq i}x_j-y_j}dy
\end{align*}
\begin{align*}
\partial_{iijj}\varphi=&\frac{\alpha}{\sigma^4}\int_{\mathbb{R}^N}e^{-\frac{\|x-y\|^2}{2
\sigma^2}}(x_i-y_i)\left (1-\frac{(x_j-y_j)^2}{\sigma^2}\right )\mathbb{1}_{y_i>\max_{k\neq i}y_k}dy\\
=&\frac{\alpha}{\sigma^4}\int_{\mathbb{R}^N}e^{-\frac{\|y\|^2}{2
\sigma^2}}y_i\left (1-\frac{y_j^2}{\sigma^2} \right)\mathbb{1}_{x_i-y_i>\max_{k\neq i}x_k-y_k}dy
\end{align*}
\begin{align*}
\partial_{ijii}\varphi=&\frac{\alpha}{\sigma^4}\int_{\mathbb{R}^N}e^{-\frac{\|y\|^2}{2
\sigma^2}}y_i \left (3-\frac{y_i^2}{\sigma^2} \right) \mathbb{1}_{x_j-y_j>\max_{m\neq j}x_k-y_k}dy \end{align*}
\begin{align*}
\partial_{ijjj}\varphi=&\frac{\alpha}{\sigma^4}\int_{\mathbb{R}^N}e^{-\frac{\|y\|^2}{2
\sigma^2}}y_i \left (1-\frac{y_j^2}{\sigma^2}\right )\mathbb{1}_{x_j-y_j>\max_{m\neq j}x_k-y_k}dy \end{align*}
\begin{align*}
\partial_{ijkk}\varphi=&\frac{\alpha}{\sigma^4}\int_{\mathbb{R}^N}e^{-\frac{\|y\|^2}{2
\sigma^2}}y_i \left (1-\frac{y_k^2}{\sigma^2}\right )\mathbb{1}_{x_j-y_j>\max_{m\neq j}x_k-y_k}dy 
\end{align*}
and
\begin{align*}
&\partial_{ijkl}\varphi=-\frac{\alpha}{\sigma^6}\int_{\mathbb{R}^N}e^{-\frac{\|x-y\|^2}{2
\sigma^2}}(x_j-y_j)(x_k-y_k)(x_l-y_l)\mathbb{1}_{y_i>\max_{m\neq i}y_m}dy\\
&=-\frac{\alpha}{\sigma^6}\int_{\mathbb{R}^N}e^{-\frac{\|y\|^2}{2
\sigma^2}}y_jy_k y_l \mathbb{1}_{x_i-y_i>\max_{m\neq i}x_m-y_m}dy \\
&=-\frac{\alpha}{\sigma^6}\int_{\mathbb{R}^{N-3}}e^{-\frac{\sum_{l\neq j,k, l}y_l^2}{2
\sigma^2}}\mathbb{1}_{x_i-y_i>\max_{m\neq i,j,k,l}x_m-y_m} \left ( \prod_{n = \{j,k,l\}} 
  \int_{x_n-x_i+y_i}^{\infty}e^{-\frac{y_n^2}{2
\sigma^2}}y_ndy_n \right) d\hat{y}_{jkl}
\end{align*}
where  $i$, $j$, $k$ and $l$ are all distinct (i.e., assuming $N \geq 4$) and $\hat{y}_{jkl}$ is a vector in $\mathbb R^{N-3}$ containing the same components as $y \in \mathbb R^{N} $ except $y_i, y_j$ and $y_k$. Since $ \int_{x_n-x_i+y_i}^{\infty}e^{-\frac{y_n^2}{2 \sigma^2}}y_ndy_n>0$, we have $\partial_{ijkl}\varphi<0$.

\subsection{Lower Bound Error: Heat Potential}
\label{app:lb_err}

To apply Theorem \ref{thm:fixed_lb} with respect to an adversary supported on $\{\pm 1\}^N$ associated with the heat potential $\varphi$, we determine the error term  $E^{\varphi}_{l.b.}(t) = C+ \sum _{\tau = t}^{-2}  K(\tau)$ where  $C$ is a constant satisfying  $\varphi(x,-1) - \min_{p} \mathbb E_{a_{-1},p} ~\varphi (x + r,  0)  \leq C$ for all $x$, and $K$ is a function satisfying
 \begin{align*}
 \frac {1}{2}\text{ess sup}_{  \bar \tau \in [\tau, \tau+1] }   \varphi_{tt}(x,\bar \tau ) + \frac {1}{6}   \text{ess sup}_{y \in [x, x -q]} ~D^3\varphi(y,\tau+1) [q,q,q]   \leq K(\tau)
  \end{align*}
for all $\tau \in [t,  -2]$, all $q$ in $\{\pm 1\}^N$ and all $x$.

In the remainder of this Appendix \ref{app:heat_bounds}, let $G$ denote an N-dimensional Gaussian random vector with mean $0$ and identity covariance. In Appendix \ref{app:heat_final_period}, we show that $| \varphi(x,-1) - \varphi(x+r,0)|  \leq C$ for all $x$ and $r$ where $C = 2 +   \sqrt  {2 \kappa } \mathbb E \max_ i  G_i$.   The expression $\mathbb E \max _i G_i$ has a closed-form expression for $N \leq 5$.  The asymptotically optimal upper bound for this quantity is  $\sqrt {2 \log N}$ (e.g, Lemmas A.12 and A.13 in \citet{cesa-bianchi06}) and a sharper non-asymptotic upper bound for $N \geq 7$ is provided in \citet{dasgupta14}.  Therefore, $C = O( \sqrt  {\kappa  \log N})$.

In Appendix \ref {app:phi_time_deriv_bound}, we prove that  $|\varphi_{tt}(x,  \tau) | \leq \frac{K_2}{ |\tau|^{\frac{3}{2}}}$ for all $x$ and $\tau \leq - 1$ where $K_2 =   \frac {\sqrt {\kappa}} {2\sqrt{2}} \mathbb{E} \left [ \left |N+2-\|G\|^2 \right |\max_i |G_i| \right]$. To bound $K_2$, we use the fact that  $\mathbb E \left [||G||^{2} \right] = N$, $\mathbb E \left [||G||^{4} \right] = N(N+2)$\footnote{$\mathbb E \left [||G||^{2m} \right] =  { \int_0^{\infty} r^{2m} r^{n-1} e^{-\frac{r^2}{2}}dr} /{\int_0^{\infty}  r^{n-1} e^{-\frac{r^2}{2}}dr}$ can be computed explicitly using properties of the Gamma function.} and $\mathbb E \max_i G_i^2 \leq 2 \log N+ 2 \sqrt {\log N} +1$.\footnote{See, e.g. Example 2.7 in \citet{boucheron13}.} By Cauchy-Schwarz inequality:
\begin{align*}
\mathbb{E} \big[ \left |N+2-\|G\|^2 \right |\max_i |G_i| \big] \leq&\sqrt{\mathbb{E}[(N+2-\|G\|^2)^2]\mathbb{E}[\max_iG_i^2}]\\
\leq&{\sqrt{2(N+2)(2\log N+2\sqrt{\log N}+1)}}
\end{align*}
Therefore, $K_2 = O(\sqrt {\kappa N \log N})$.

In Appendix \ref {app:phi_third_deriv_lb_err}, we show that $| D^3 \varphi [q,q,q](x, t) |  \leq   \frac{1} {|t|} K_3$ for all $q \in[-1,1]^N$ where $K_3 = \frac {1}{\kappa} \left (\frac{3}{\sqrt{2}} \sqrt {N} +  a\mathbb{E} \max_i|1-G_i^2| \right)$ and $a = 1$ for $N =2$ and $2$ for $N \geq 3$. To bound $K_3$, note that % $\mathbb E \left |N-\|Y\|^2 \right|$ is the mean absolute deviation of a chi-square random variable. It can be expressed in closed form in terms of a Gamma function, and is bounded by the standard deviation $\sqrt{2N}$. Finally,
 $\mathbb{E} \max_i|1-G_i^2| \leq \mathbb{E}\max_iG_i^2+1$, where the right-hand side is bounded as described in the preceding paragraph. Therefore, $K_3 = O\left (\frac{\sqrt {N}}{ \kappa}\right)$.

Therefore,  $K(\tau)  = \frac {1}{2} \frac{K_2}{ |\tau+1|^{\frac{3}{2}}}+  \frac {K_3}{6} \frac{1} {|t+1|} $ and 
\begin{align*}
\sum_{\tau = t}^{-2} K(\tau) &= \sum_{\tau = t}^{-2} \frac {1}{2} \frac{K_2}{ |\tau+1|^{\frac{3}{2}}}+  \frac { K_3}{6} \frac{1} {|t+1|} \leq  \sum_{s = 1}^ {|t|-1} \frac{K_2}{2s^{\frac{3}{2}}}+ \frac{K_3} {6s} \leq  \frac{K_2}{2}+ \frac{K_3} {6} +\int_{s = 1}^ {|t|-1} \frac{K_2}{2s^{\frac{3}{2}}}+ \frac{K_3} {6s} ds\\
& =    \frac{K_2}{2} \left (3 - \frac{2}{\sqrt{|t| - 1}}\right) + \frac{K_3} {6} (1+\log(|t| - 1))
 \end{align*}
The foregoing shows that for $\kappa \in [\frac{1}{2}, 1]$, $E^{\varphi}_{l.b.}(t) = O(\sqrt {N \log N}+ \sqrt{N} \log |t|)$  by Theorem  \ref{thm:fixed_lb}. \\

Since $\varphi$ is smooth, Proposition \ref{rem:fixed_lb_lipschitz} is also available: to use it we identify a function $K'$ satisfying  
\begin{align*}
 & \frac {1}{2}\text{ess sup}_{  \bar \tau \in [\tau, \tau+1] }    u_{tt}(x,\bar \tau)   -\frac {1}{24}  \text{ess inf}_{y \in [x, x -q]} ~D^{4}u(y,\tau+1) [q,q,q,q]   \leq K'(\tau)
  \end{align*}
for  all $\tau \in [t, -2]$,  all $q \in \{\pm 1\}^N$ and all $x$. 
In Appendix \ref {app:phi_fourth_deriv_lb_err} we show that for $q \in \{\pm1\}^N$,
$|D^{4}\varphi(x,t) [q,q,q,q]\leq \frac{K_4(t)}{| t|^\frac{3}{2}}$
where $K_4 = \frac{2\sqrt{2}N}{\kappa^\frac{3}{2}}(2\sqrt{6}+3\sqrt{2N+4})$. Therefore,  $K'(\tau)  = \frac {1}{2} \frac{K_2}{ |\tau+1|^{\frac{3}{2}}}+  \frac {1}{24} \frac{K_4}{ |\tau+1|^{\frac{3}{2}}}$ and $ \sum_{\tau = t}^{-2} K'(\tau)  \leq   \left  (\frac{K_2}{2} + \frac{K_4} {24}\right) \left (3 - \frac{2}{\sqrt{|t| - 1}}\right) $. This shows that for $\kappa \in [\frac{1}{2}, 1]$, $E^{\varphi}_{l.b.}(t) = O(N\sqrt {N })$ uniformly in $t$.  Combining this  with the result in the preceding paragraph, we obtain $E^{\varphi}_{l.b.}(t) = O \left( N\sqrt {N }\wedge \sqrt {N \log N}+ \sqrt{N} \log |t| \right)$.

\subsubsection{Bound on $|\varphi(x,-1) - \varphi(x+r,0)|$}
\label{app:heat_final_period} 

%In this setting, we will use the following standard probability result.   Let $G$ be an N-dimensional Gaussian random vector with mean 0 and covariance $I$, and let a change of integration variables be given by $r = \sqrt {2s}$. Then, using the properties of the Gamma function,
%\begin{align}
%\label{eq:gaussian_moments} 
%\mathbb E \left [||G||^{2m} \right] &= \frac { \int_0^{\infty} r^{2m} r^{n-1} e^{-\frac{r^2}{2}}dr}{\int_0^{\infty}  r^{n-1} e^{-\frac{r^2}{2}}dr}=\frac {2^m \int_0^{\infty} s^{m+\frac {N}{2}}  e^{-s}dr}{\int_0^{\infty}  s^{\frac{N}{2}} e^{-s}dr} =2^m\frac {\Gamma (m+\frac {N}{2}) }{\Gamma (\frac{N}{2}) }\\
%&= 2^m  {\frac{\sqrt{\pi}(2m+{N}-2)!!}{2^{\frac{2 m+N-1}{2}} }} \Big/ {\frac{\sqrt{\pi}({N}-2)!!}{2^{\frac{N-1}{2}} }} = N(N+2) ... (N+2m-2)
%\end {align}
%where $n!! = n(n-2)(n-4)...$ denotes the so-called double factorial of $n$.  

% Note that 
%\begin{align*}
%u(0,t) &=\int_{\mathbb R^N} (2 \pi \sigma^2 )^{\frac{N}{2}} e^{\frac  {\|y \|^2} {-2 \sigma^2} } \max_i (y)_i dy = \sigma \mathbb E  \max_i X_i 
%\end{align*}
%where  $X$ is a standard Gaussian random vector $N(0, I_N)$. 

We decompose the difference as follows
\[
\varphi(x+r,0) - \varphi(x,-1)  = \max_i(x+r)_i -  \max_i x_i +\varphi(x,0) - \varphi(x,-1) 
\]
Since $r=q_I \mathbb 1-q \in[-2,2]^N$, we obtain $-2\leq \max_i (x+r)_i -  \max_i x_i\leq 2$. Also since $ - \max_i (x-y)_i \geq -\max_i x_i + \min_i y_i$,   
\begin{align*}
\varphi(x,0) - \varphi(x,-1)   &=    \alpha \int e^{- \frac{\|y\|^2}{2\sigma^2}}  \max_i x_i - \max_i(x-y)_i   dy  \geq \alpha\int e^{- \frac{\|y\|^2}{2\sigma^2}}  \min_i y_i dy  = -\sigma \mathbb E \max_{ i}  G_i   
\end{align*}
where  $\sigma = \sqrt {2\kappa}$ at $t = -1$. Thus, $\varphi(x+r,0) - \varphi(x,-1) \geq -2 -   \sqrt  {2\kappa} \mathbb E \max_ i  G_i$. Similarly, since $ - \max_i (x-y)_i \leq -\max_i x_i + \max y_i$, we obtain $\varphi(x+r,0) - \varphi(x,-1) \leq  2 +   \sqrt  {2\kappa } \mathbb E \max_ i  G_i$. 

\subsubsection{Bound on $|\varphi_{tt}|$ }
\label{app:phi_time_deriv_bound}

For each $t<0$,  it suffices to give a uniform upper bound of $|u_{tt}(x,t)|$ over all $x\in \mathbb R^N$. Since
\[
\partial_{tt}\varphi= \partial_t (- \kappa \Delta u) = -\kappa \Delta  (\partial_t \varphi)  = \kappa^2\Delta^2\varphi
\]
it suffices to bound $\Delta^2\varphi=\sum_{i,j}\partial_{iijj} \varphi$. By Appendix \ref{app:heat_derivs}
\begin{align*}
\sum_{i,j}\partial_{iijj}\varphi&=\sum_i\partial_{iiii}\varphi+\sum_{j\neq i}\partial_{iijj}\varphi\\   
&=\sum_i\frac{\alpha}{\sigma^4}\int_{\mathbb{R}^N}e^{-\frac{\|y\|^2}{2
\sigma^2}}y_i\left (N+2-\frac{\|y\|^2}{\sigma^2}\right)\mathbb{1}_{x_i-y_i>\max_{k\neq i}x_k-y_k}dy\\
&=\frac{\alpha}{\sigma^4}\int_{\mathbb{R}^N}e^{-\frac{\|y\|^2}{2
\sigma^2}}\left (N+2-\frac{\|y\|^2}{\sigma^2}\right)\sum_iy_i\mathbb{1}_{x_i-y_i>\max_{k\neq i}x_k-y_k}dy
\end{align*}
Combining above with the fact that $|\sum_iy_i\mathbb{1}_{x_i-y_i>\max_{k\neq i}x_k-y_k}|\leq\max_i|y_i|$
\begin{align*}
\Big |\sum_{i,j}\partial_{iijj}\varphi \Big |\leq&\frac{1}{\sigma^3}\mathbb{E}\Big [|N+2-\|G\|^2|\max_i|G_i| \Big ]
\end{align*}
Therefore, $|\partial_{tt}\varphi (x,t)|\leq\frac{1}{|t|^{\frac{3}{2}}} \frac{\sqrt{\kappa }}{2 \sqrt 2} \mathbb{E}\Big [|N+2-\|G\|^2|\max_i|G_i|$\Big].

\subsubsection{Bound on $|D^3 \varphi [q,q,q]|$}
\label{app:phi_third_deriv_lb_err}
For each $t<0$,  we bound  $| D^3\varphi( x, t) [q,q,q]|$ uniformly in $x\in \mathbb R^N$ and $q\in[-1,1]^N$. First, note that
\begin{align*}
D^3 \varphi [q,q,q]=&\sum_i(\partial_{iii} \varphi q^2_i+3\sum_{j\neq i}\partial_{ijj} \varphi q^2_j)q_i+\sum_i\sum_{j\neq i}\sum_{k\neq  i,j}\partial_{ijk} \varphi q_iq_jq_k\\
&=\sum_i(-2\partial_{iii} \varphi q^2_i+3\sum_{j}\partial_{ijj}\varphi q^2_j)q_i+\sum_i\sum_{j\neq i}\sum_{k\neq  i,j}\partial_{ijk} \varphi q_iq_jq_k
\end{align*}
We derive the following identity by linearity of $\varphi$ along $\mathbb 1$: 
\begin{align*}
\sum_i\sum_{j\neq i}\sum_{k\neq  i,j}\partial_{ijk}\varphi=&-\sum_i\sum_{j\neq i}(\partial_{ijj}\varphi +\partial_{iij}\varphi)= -2\sum_i\sum_{j\neq i}\partial_{iij}\varphi =2\sum_i\partial_{iii}\varphi
\end{align*}

Using the fact that $\partial_{ijk}\varphi>0$ and this identity, for $N \geq 3$, 
\begin{align*}
\left |D^3 \varphi [q,q,q] \right |&\leq  2\sum_i \left |\partial_{iii}\varphi \right |+3  \sum_i \Big |\sum_{j}\partial_{ijj}\varphi q^2_j \Big|+\sum_i\sum_{j\neq i}\sum_{k\neq  i,j}\partial_{ijk}\varphi\\
&=2 \sum_i \left |\partial_{iii}\varphi \right |+3  \sum_i \Big |\sum_j\partial_{ijj}\varphi q^2_j \Big|+2\sum_i \partial_{iii}\varphi \\
&\leq 3  \sum_i \Big |\sum_j\partial_{ijj}\varphi q^2_j \Big|+4\sum_i |\partial_{iii}\varphi|
\end{align*}
and for $N =2$, 
\begin{align*}
|D^3 \varphi [q,q,q]|\leq 2\sum_i|\partial_{iii} \varphi|+3  \sum_i \Big |\sum_j\partial_{ijj}\varphi q^2_j \Big|
\end{align*}

Using the formulas for third derivatives,
\begin{align*}
\sum_j\partial_{ijj}\varphi q^2_j =-\frac{c_N}{\sigma^2}\int e^{-\frac{\|y\|^2}{2
\sigma^2}}\left (\sum_j q_j^2 \left(1-\frac{y_j^2}{\sigma^2}\right ) \right)\mathbb{1}_{x_i-y_i>\max_{k\neq i}x_k-y_k}dy
\end{align*}
we obtain
\begin{align*}
 \sum_i \big |\sum_j\partial_{ijj}\varphi q^2_j \big| \leq&\frac{c_N}{\sigma^2}\int e^{-\frac{\|y\|^2}{2
\sigma^2}}\Big |\sum_j q_j^2 \Big(1-\frac{y_j^2}{\sigma^2}\Big ) \Big| dy\\
=&\frac{1}{\sigma^2}\mathbb{E}\Big |\sum_j q_j^2 \left(1-G_j^2\right ) \Big|
\end{align*}

Using Jensen's inequality and the independence of $G_j$,
\begin{align*}
\mathbb{E}\left |\sum_j q_j^2 \left(1-G_j^2\right ) \right| \leq \sqrt {\mathbb{E}\left (\sum_j q_j^2 \left(1-G_j^2\right ) \right)^2 } \\ 
= \sqrt {Var \left (\sum_j q_j^2 G_j^2\right ) } =  \sqrt { 2 \sum_j q_j^4  } \leq \sqrt{2N}
\end{align*}

Also, 
\begin{align*}
\sum_i|\partial_{iii} \varphi |\leq&\frac{\alpha}{\sigma^2}\int_{\mathbb{R}^N}e^{-\frac{\|y\|^2}{2
\sigma^2}}\sum_i \Big |1-\frac{y_i^2}{\sigma^2} \Big|\mathbb{1}_{x_i-y_i>\max_{j\neq i}x_j-y_j}dy\\
\leq&\frac{\alpha}{\sigma^2}\int_{\mathbb{R}^N}e^{-\frac{\|y\|^2}{2
\sigma^2}}\max_i|1-\frac{y_i^2}{\sigma^2}|dy\\
=&\frac{1}{\sigma^2}\mathbb{E}[\max_i|1-G_i^2|]
\end{align*}

Therefore, for all $q \in [- 1,1]^N$,  $| D^3 \varphi(x, t) [q,q,q]|  \leq   \frac{1} {|t| } C_3$ 
where $C_3 = \frac {1}{\kappa} \left (\frac{3}{\sqrt 2} \sqrt{N}+  a\mathbb{E} \max_i|1-G_i^2| \right)$ and $a = 1$ for $N =2$ and $a=2$ for $N \geq 3$.

\subsubsection{ Bound of $|D^{4} \varphi [q,q,q,q]|$ for $q \in \{\pm 1\}$. }
\label {app:phi_fourth_deriv_lb_err}

For each $t<-1$, we bound $D^4 \varphi [q,q,q,q]$ uniformly for all $x\in \mathbb R^N$ and $q\in\{\pm 1\}^N$.  For distinct $i,j$ and $k$ by Appendix \ref{app:heat_derivs} we have
\begin{align*}
\partial_{ijii}\varphi + \partial_{ijjj}\varphi + \sum_{k \neq i,j} \partial_{ijkk}\varphi = \frac{\alpha}{\sigma^4}\int_{\mathbb{R}^N}e^{-\frac{\|y\|^2}{2\sigma^2}}y_i \left (N+2-\frac{ \|y\|^2}{\sigma^2}\right )\mathbb{1}_{x_j-y_j>\max_{m\neq j}x_k-y_k}dy    
\end{align*}

Also, 
\begin{align*}
\sum_{ i} \sum_{ j}| \partial_{ijii}\varphi|  
\leq  \sum_{ i} \frac{\alpha}{\sigma^4}\int_{\mathbb{R}^N}e^{-\frac{\|y\|^2}{2
\sigma^2}} \left |y_i \left (3-\frac{y_i^2}{\sigma^2} \right) \right | dy
 \leq \frac{N}{\sigma^3}  \mathbb E  \Big[ | G (3-  G^2)|  \Big  ]
\end{align*}

Since $\partial_{ijkl} \varphi<0$ for distinct $i,j,k,l$  (assuming $N \geq 4$) and $D^4 \varphi [\mathbb 1,\mathbb 1,\mathbb 1,\mathbb 1]=0$.
\begin{align*}
D^4 \varphi [q,q,q,q] \geq&   \sum_i \partial_{iiii} \varphi+3\sum_{i } \sum_{j \neq i}\partial_{iijj} \varphi + 2 \sum_{i }\sum_{j\neq i}(\partial_{ijii} \varphi+ \partial_{ijjj} \varphi) q_i q_j \\
&+ 6  \sum_{i }\sum_{j\neq i} \sum_{k\neq i,j} \partial_{ijkk} \varphi q_i q_j + \sum_{i }\sum_{j\neq i} \sum_{k\neq i,j} \sum_{l\neq i,j,k} \partial_{ijkl} \varphi \\
=&  2 \sum_{i }\sum_{j\neq i}(\partial_{ijii} \varphi+ \partial_{ijjj} \varphi) (q_i q_j-1) + 6  \sum_{i }\sum_{j\neq i} \sum_{k\neq i,j} \partial_{ijkk} \varphi (q_i q_j-1)\\
=&-4 \sum_{i } \sum_{j \neq i }(\partial_{iiij} \varphi +\partial_{ijjj} \varphi)(q_i q_j-1) +  6  \sum_{i }\sum_{j\neq i} \partial_{ij}\Delta \varphi(q_i q_j-1)  \\
\geq&-16\sum_{i } \sum_{j }|\partial_{iiij} \varphi|-24\sum_{i }\sum_{j} |\partial_{ij}\Delta \varphi|\\
\geq& -\frac{16N}{\sigma^3}  \mathbb E  \Big [\Big | G (3-  G^2)  \Big | \Big ] -24  \frac{\alpha}{\sigma^4}\int_{\mathbb{R}^N}e^{-\frac{\|y\|^2}{2
\sigma^2}} \sum_{i }\left | y_i \left (N+2-\frac{ \|y\|^2}{\sigma^2}\right )\right |dy \\
=& -\frac{8}{\sigma^3} \Big (
  2N \mathbb E \Big [ \left | G (3-  G^2)  \right | \Big] + 3 \mathbb E \Big [ \sum_{i } \left | G_i \left (N+2- \|G\|^2\right  ) \right | \Big] \Big )\\
\geq&-\frac{8N}{\sigma^3}(2\sqrt {\mathbb E \Big [G^2 \Big ] \mathbb E \Big [(3 - G^2)^2 \Big] } + 3 \sqrt {\mathbb E [G^2]\mathbb E [\left (N+2- \|G\|^2 \right  )^2] })\\
\geq&-\frac{2\sqrt{2}N}{(\kappa |t+1|)^\frac{3}{2}}(2\sqrt{6}+3\sqrt{2N+4})
\end{align*}
For $N=2,3$ the calculation is similar.

\subsection{ Heat Potential: Upper Bound Error Term}
\label{app:ub_err}
To apply Theorem \ref{thm:fixed_ub} with respect to the player associated with the heat potential $\varphi$, we also need to determine the error term  $E^{\varphi}_{u.b.}(t) = C+ \sum _{\tau = t}^{-2}  K(\tau)$ where $C$ is a constant satisfying   $\max_{a} \mathbb E_{a, p_{-1}} ~\varphi  ( x + r,  0)  - \varphi(x,-1)  \leq C$ for all $x $ and K is a function K 
 \[ 
 -  \frac {1}{2} \text{ess inf}_{\bar \tau \in [\tau, \tau+1] }  w_{tt}(x,\bar \tau) -\frac {1}{6}   \text{ess inf}_{y \in [x, x -q]} ~  D^3w(y,\tau+1) [ q,q,q]  \leq K(\tau)
\] for all $\tau \in [t,-2]$, all $q \in [- 1,1]^N$ and all $x$.

In Appendix \ref{app:lb_err}, we showed that $| \varphi(x,-1) - \varphi(x+r,0)|  \leq C$ for all $x$ and $r$ where  $C = O( \sqrt  {\kappa  \log N})$.   Similarly, in that section we proved that  $|\varphi_{tt}(x,  t) | \leq \frac{K_2}{ |t|^{\frac{3}{2}}}$ for all $x$ and $t < - 1$ where $K_2 = O(\sqrt {\kappa N \log N})$. Finally, we showed that $| D^3 \varphi [q,q,q](x, t) |  \leq   \frac{1} {|t|} K_3$ for all $q \in[-1,1]^N$ where $K_3 = O\left (\frac{\sqrt {N}}{ \kappa}\right)$.
These results are also applicable in the upper bound setting and therefore for $\kappa = 1$, $E^{\varphi}_{u.b.}(t) = O(\sqrt {N \log N}+ \sqrt{N} \log |t|)$.

\section{Comb Adversary}
\label{app:comb_adversary}
In this Appendix, we show that $\Delta \varphi  \leq \langle D^2 \varphi \cdot q^c, q^c \rangle$.  Appendix \ref{app:heat_ordering} shows that if $x_i \geq x_j \geq x_l$, then $ \partial_{ij} \varphi \leq \partial_{il} \varphi \leq 0 $. Using this result, Appendix \ref{app:heat_off-diagonal} shows that $\sum_{i<j} \partial_{ij} \varphi q^c_i q^c_j \geq 0$, which implies the desired result.

\subsection{Ordering of Mixed Derivatives of the Heat Potential}
 \label{app:heat_ordering}
 Note that $\int_{x_j-x_i+y_i}^{\infty}e^{-\frac{y_j^2}{2 \sigma^2}}y_jdy_j = \sigma^2 e^{-\frac{(x_j-x_i +y_j)^2}{2 \sigma^2}}$, and\
 \begin{align*}
&\int^{x_i-\max_{k\neq i,j} (x_k-y_k)}_{-\infty}e^{-\frac{y_i^2}{2 \sigma^2}} \left ( \sigma^2 e^{-\frac{(x_j-x_i +y_j)^2}{2 \sigma^2}} \right) dy_i \\
&= \sigma^3 \frac {\sqrt {\pi}}{2} e^{-\frac {(x_i - x_j)^2}{4 \sigma^2}}   \left  [\text{erf} \left[ \frac {x_i -x_j +2y_i}{2 \sigma} \right ] \right ]_{y_i=- \infty}^{x_i - \max_{k\neq i,j }(x_k -y_k)}   \\
&= \sigma^3 \frac {\sqrt {\pi}}{2} e^{-\frac {(x_i - x_j)^2}{4 \sigma^2}}  \text{erf} \left[ \frac {x_i +x_j -2 \max_{k\neq i,j } (x-y)_k}{2 \sigma} +1 \right ]  
 \end{align*}

Plugging the above into the expression for $\partial_{ij} \varphi$ in Appendix \ref{app:heat_derivs} for $i \neq j$, we obtain
\begin{align*}
\partial_{ij} \varphi(x,t) &=  -\frac{c_N}{\sigma^2}  \int_{\mathbb R^{N-2}} e^ {- \frac {\sum_{k \neq i,j } y_k^2}{2 \sigma^2}} \int_{-\infty}^{x_i - \max_{k \neq i,j } (x_k-y_k)} e^{- \frac{y_i^2}{2 \sigma^2} }  \left (\int_{x_j -x_i+y_i}^{\infty} e^{- \frac {y_j^2}{2 \sigma^2}} y_j  dy_j   \right)dy_i  d\hat y_{i,j}\\
&=   -  {c_N} \sigma \frac {\sqrt {\pi}}{2} e^{-\frac {(x_i - x_j)^2}{4 \sigma^2}}  \int_{\mathbb R^{N-2}} e^{- \frac {\sum_{k=1 }^{N-2} z_k^2}{2 \sigma^2}} \text{erf} \left[ \frac {x_i +x_j -2 \max_{1 \leq k \leq N-2 } (\hat x_k-z_k)}{2 \sigma} +1 \right ] dz
\end{align*} 
 where $\hat x$ is a vector in $\mathbb R^{N-2}$ containing the same components as $x$ except for $x_i$ and $x_j$. 
 
 Let $\varphi =\varphi(x,t) $ be evaluated at  arbitrary $x$ and $t<0$ and let $\{(i)\}_{\in [N]}$ be  the ranked coordinates defined in Section \ref{sec:heat_fixed} associated with $x$. 
Showing that if $x_i \geq x_j \geq x_l$, then $ \partial_{ij} \varphi \leq \partial_{il} \varphi \leq 0 $ is equivalent to showing that if $i \geq j \geq l$, then $\partial_{(i)(j)} \varphi(x,t) \leq \partial_{(i)(l)} \varphi(x,t) \leq 0$. 
 
 Note that if $i \geq j \geq l$,   then $x_{(i)} + x_{(j)} -2 \max_{ k \neq i,j } ( x_{(k)}-z_{(k)}) \geq x_i+ x_l -2 \max_{ k \neq i,l } ( x_{(k)}-z_{(k)})$ for all $z \in \mathbb R^{N-2}$. Since $\text{erf}$ is an increasing function,   $\partial_{(i)(j)} \varphi(x,t) \leq \partial_{(i)(l)} \varphi(x,t) \leq 0$, as desired.

\subsection{Sign of $\sum_{i<j} \partial_{ij} \varphi q^c_i q_j$} 
\label{app:heat_off-diagonal} 
We show that for $q^c$ chosen in accordance with the comb strategy $a^c$, $\sum_{i<j} \partial_{ij} \varphi(x,t) q^c_i q^c_j = \sum_{i<j} \partial_{(i)(j)} \varphi(x,t) q^c_{(i)} q^c_{(j)} \geq 0$ (where the left hand side uses coordinates in an arbitrarily indexed canonical basis and the right-hand side uses ranked coordinates associated with $x$).  If $N$ is even, 
 \begin{align*}
 \sum_{i<j} \partial_{(i)(j)} \varphi q^c_{(i)} q^c_{(j)} =& \sum_{i: \text{odd}} \left (\left (\sum_{i < j: \text{even} <N} - \partial_{(i)(j)} \varphi +   \partial_{(i)(j+1)} \varphi \right) - \partial_{(i)(N)}\varphi \right)\\
 & +\sum_{i: \text{even}} \left  (\sum_{i < j: \text{odd} <N} - \partial_{(i)(j)} \varphi +   \partial_{ij+1} \varphi \right)  
 \end {align*}
 Similarly, if $N$ is odd, 
 \begin{align*}
 \sum_{i<j} \partial_{(i)(j)} \varphi q^c_{(i)} q_{(j)} =& \sum_{i: \text{odd}}\left (\sum_{i < j: \text{even} <N} - \partial_{(i)(j)} \varphi +   \partial_{(i)(j+1)} \varphi \right)\\
 &  +\sum_{i: \text{even}}  \left ( \left  (\sum_{i < j: \text{odd} <N} - \partial_{(i)(j)} \varphi +   \partial_{(i)(j+1)} \varphi \right) - \partial_{(i)(N)}\varphi \right) 
 \end {align*}
 Both of these expressions are  positive by the ordering of mixed partial derivatives established in  Appendix \ref{app:heat_ordering} and the fact that $\partial_{(i)(N)} \varphi < 0$ for $i \neq N$ as shown in Appendix \ref{app:heat_derivs}. 

 \section{Lower Bound Heat Potential: Diffusion Factor}
\label{app:heat_lb_kappa}
Note that $\varphi(x+ c \mathbb 1, t)=\varphi(x, t)+c$ implies that  $ \sum_{i} \partial_i \varphi=1$, $\partial_{ii}\varphi = - \sum \partial_{ij}\varphi$,  and therefore $D^2 \varphi \cdot \mathbb 1 = 0$. For $N=2$, this result and the fact that  $D^2u$ is symmetric imply that $D^2u$  has the form
\[
   D^2u=
  \left[ {\begin{array}{cc}
   a & -a \\
   -a & a \\
  \end{array} } \right]
\] It is straightforward to verify that $\frac{1}{2}\mathbb E_{a^h}  \langle D^2 \varphi \cdot q, q \rangle = \Delta\varphi$ and therefore  $\kappa_h=1$.

When $N>2$, $\frac{1}{2}\mathbb E_{a^h}  \langle D^2 \varphi \cdot q, q \rangle =  \frac{1}{|S|}\sum_{q \in S}\langle D^2 \varphi \cdot q, q \rangle=\langle D^2 \varphi,\frac{1}{|S|}\sum_{q\in S} q q ^\top \rangle_F$ where the set $S$ is defined in Section \ref{sec:heat_fixed} and $\langle\ ,\rangle_F$ is the Frobenius inner product. Since $S$ is permutation invariant, the off-diagonal entries of  $\frac{1}{|S|}\sum_{q\in S}qq^\top$ are all equal and the diagonal entries are all equal to $1$, and therefore, this expression is equal to $(1-\lambda) I+\lambda M$ for some constant $\lambda$ where $M=\mathbb 1\mathbb 1^\top$.    Note that 
\begin{align*}
\frac{1}{|S|}\sum_{q\in S}\langle q q^\top, M\rangle_F=   
\begin{cases}
1&\text{if}~ N~ \text{is odd} \\
0&\text{if}~ N~ \text{is even}\\
\end{cases}
\end{align*}
which implies that $\lambda=   -\frac{1}{N}$ if $N$ is odd and $-\frac{1}{N-1}$ if $N$ is even. Using the fact that $\langle D^2 \varphi, M\rangle_F=0$, we obtain $\langle D^2 \varphi,\frac{1}{|S|}\sum_{q\in S}qq^\top \rangle_F= (1-\lambda)\Delta u$. This shows that $
\frac{1}{2}\mathbb E_{a^h}  \langle D^2 \varphi \cdot q, q \rangle = \kappa_h \Delta\varphi$ where $\kappa_h = \frac{1}{2}(1-\lambda)$, as desired.

The foregoing proof is short and elementary. But to put the result in context, the only properties  $D^2 \varphi $ we used is that it is symmetric and has $\mathbb 1$ in the kernel. Therefore, for an arbitrary $N \times N$ matrix $M$ with these properties, we showed that
\begin{align}
2 \kappa_h \text{Trace} (M) = \mathbb E_{a^h}  \langle M \cdot q, q \rangle \leq  \max_{q \in \{\pm 1\}^N} \langle  M \cdot q, q \rangle
\label{eq:heat_adversary_expectation}
\end{align}
where the inequality follows from a probabilistic argument. 

The Laplacian $L(G)$ of an undirected graph $G = (V, E)$ with $|V| = N$ vertices is given by
\[
L(G)_{ij} = 
\begin{cases}
 -w_{ij} &\text{if}~  i \neq j \\
\sum_{k \in {N}} w_{ik} &\text{if}~  i= j 
 \end{cases}
 \]
 where $w_{ij} \geq 0$ is the weight of the edge $(i,j) \in E$. The sum of the edge weights of $G$ is  $|E| = \sum_{i <j} w_{ij} u =  \frac {1} {2} \text{Trace} (L(G))$ and the \textit {maximum cut} of $G$ is  $\text{maxcut}(G) = \max_{q \in \{\pm 1\}^N} \sum_{i<j}  w_{ij} \frac {1 -q_iq_j}{2} $. Using the convention that $w_{ii} = 0$,
\begin{align*}
&\max_q  \langle L(G) \cdot q, q \rangle  = \max_q \sum_{i,j} - w_{ij}q_iq_j + \sum_{i} \Big [ \sum_j  w_{ij} \Big]  q_{i}^2 \\
& = \max_q \sum_{i,j}  w_{ij} \Big (1 -q_iq_j \Big) = 2 \max_q \sum_{i<j}  w_{ij} \left (1 -q_iq_j \right) = 4 \text{maxcut}(G)
\end{align*}
where  the feasible set of $q$ is $\{\pm 1\}^N$. 

For a graph $G_u$ with each $w_{ij} \in \{0,1\}$ (unweighted graph), it is known that $ \left ( \frac{1}{2} + \frac {1}{2N} \right) |E| \leq \text{maxcut}(G_u)$ \citep{haglin}. Since every Laplacian is symmetric and has $\mathbb 1$ in the kernel, the inequality \eqref{eq:heat_adversary_expectation} implies $ \kappa_h |E| \leq \text{Max-Cut}(G)$ for a weighted graph. (Although, similarly to a graph Laplacian, the off-diagonal elements of $D^2 \varphi $ are negative as shown in Appendix \ref{app:heat_derivs}, we did not use this property in our proof.\footnote{Therefore, our result is broader and also holds for matrices with arbitrary signs of off-diagonal elements, such as Laplacians of graphs with signed edge weights.})

\section{Upper Bound Heat Potential: Diffusion Factor }
\label{app:heat_ub}

Appendix \ref{app:heat_derivs} shows that $\partial_{ij} \varphi< 0$ for $i \neq j$ and $\partial_{ii} \varphi> 0$. Also the fact $ \sum_{i}  \partial_i \varphi =1$, which follows from linearity of $\varphi$ in the direction of $\mathbb 1$, implies that $\sum_{i,j}\partial_{ij} \varphi =0$. Thus, $ \frac {1} {2}  \max_{q \in [-1,1]^N}  \langle D^2 \varphi \cdot q, q \rangle \leq    \frac {1} {2}  \Delta \varphi  - \frac {1} {2}  \sum_{i\neq j}  \partial_{ij} \varphi =   \Delta \varphi$.

\section{Proof of Claim \ref{cl:max_pde_soln}}
\label{app:max_pde}
We prove Claim \ref{cl:max_pde_soln} as follows. In Appendix \ref{app:Derivatives of the max-based potential}, we  compute the spatial derivatives of max potential $\psi$ defined by \eqref{eq:max_pde_soln} up to the third order for every $x$ in the ranked coordinates $\{(i)\}_{i\in [N]}$, as defined in Section \ref{sec:heat_fixed}. In Appendix \ref{app:maxC2} we prove that when the ranking changes, the second derivatives are continuous, and therefore, $\psi$ is a $C^2$ function of $x$.   The third order spatial derivatives are defined almost everywhere (i.e., everywhere except where the ranking changes) and bounded. Therefore,  the  second order derivatives of $\psi$ are Lipschitz continuous but $\psi$ is not a $C^3$ function of $x$.  Finally, in Appendix \ref{app:max_pde_soln}, we use these results to show that $\psi$ satisfies \eqref{eq:max_pde}.

\subsection{Derivatives of the Max Potential}
\label{app:Derivatives of the max-based potential}
Note that 
\begin{align*}
f'(z) = \text{erf} \left ( \frac {z}{\sqrt {2}} \right) ~~&\text {and}~~ f''(z) =  \sqrt {\frac{2} {\pi  }} \exp \left (- \frac {z^2}{2} \right)
\end {align*}
Then for $i\leq j$
\begin{align*}
\partial_{(i)} f (z_l) =
 \begin {cases}
 0  &\text{~if~}   l + 1< i   \\
  -\frac { l}{\sqrt {-2 \kappa t}} f'(z_l) & \text{~if~}  l +1 = i  \\
 \frac {1}{\sqrt {-2 \kappa t}} f'(z_l) & \text{~if~}   l \geq i  
 \end{cases} 
~~&\text{and}~~~ \partial_{(i)(j)} f (z_l) =
 \begin {cases}
 \frac { 1}{(-2 \kappa t)} f''(z_l) &\text{~if~}  j \leq  l \\
  \frac { l^2 }{(-2 \kappa t)} f''(z_l) & \text{~if~}  i =j =  l+1  \\
 \frac {l}{2 \kappa t} f''(z_l) &\text{~if~}  i < j = l+1 \\
 0  &\text{~if~}   j  >  l+1  
 \end{cases}
 \end{align*} 
Therefore, the first derivatives are
\begin{align*}
\partial_{(i)} \psi &= 
 \begin {cases}
\frac {1}{N} + \sum_{l=1}^{N-1} c_l f'(z_l)  &\text{~if~} i=1  \\
\frac {1}{N} + \sum_{l=i}^{N-1} c_l f'(z_l) - (i-1) c_{i-1}f'(z_{i-1})   &\text{~if~} i \geq 2 
\end{cases}
\end{align*}
Since $x_{(1)} \geq x_{(2)} \geq ... \geq x_{(N)}$,  we have $0 \leq z_1 \leq z_2 \leq ... \leq z_{N-1}$ and therefore $ 0\leq f'(z_1) \leq f'( z_2) \leq ... \leq f'(z_{N-1})$. As a consequence $\partial_i \psi \geq 0, \forall i\in[N]$.

The second derivatives are
\begin{align*}
\partial_{(i)(i)} \psi &= 
 \begin {cases}
 \frac {1 }{\sqrt {-2 \kappa t}}  \sum_{l=1}^{N-1} c_l f''(z_l)  &\text{~if~} i =1  \\
\frac {1 }{\sqrt {-2 \kappa t}} \left (\sum_{l= i}^{N-1} c_l f''(z_l) + (i-1)^2c_{i-1}f''(z_{i-1}) \right)   &\text{~if~} 2 \leq  i \leq N-1 \\
\frac {1 }{\sqrt {-2 \kappa t}}  (N-1)^2c_{N-1}f''(z_{N-1})    &\text{~if~} i= N 
\end{cases}
\end{align*} 
or for $i<j$
\begin{align*}
\partial_{(i)(j)} \psi &= 
 \begin {cases}
 \frac {1 }{\sqrt {-2 \kappa t}} \left ( \sum_{l= j}^{N-1} c_l f''(z_l)   -  (j-1) c_{j-1} f''(z_{j-1}) \right) &\text{~if~}  j <N   \\
- \frac {1 }{\sqrt {-2 \kappa t}} ( N-1) c_{N-1} f''(z_{N-1}) &\text{~if~}  j =N   
\end{cases}
\end{align*} 

The third derivatives are
\begin{align*}
\partial_{(i)(i)(i)}\psi = 
\begin{cases}
\frac {1}{(-2 \kappa t)} \sum_{l =1}^{N-1} c_l f'''(z_l) & \text {if}~ i=1 \\
\frac {1}{(-2 \kappa t)} \left( \sum_{l =i}^{N-1} c_l f'''(z_l) - (i-1)^3 c_{i-1} f'''(z_{i-1}) \right)& \text {if}~ 2 \leq i \leq N-1\\
\frac {1}{2 \kappa t}  (N-1)^3 c_{N-1} f'''(z_{N-1}) & \text {if}~ i = N \\
\end{cases}
\end{align*}
 when $i\neq j$,
\begin{align*}
\partial_{(i)(j)(j)}\psi = 
\begin{cases}
\frac {1}{(-2 \kappa t)} \left( \sum_{l =j}^{N-1} c_l f'''(z_l) + (j-1)^2 c_{j-1} f'''(z_{j-1}) \right)& \text {if}~ i<j \leq N-1\\
\frac {1}{(-2 \kappa t)} (N-1)^2 c_{N-1} f'''(z_{N-1})& \text {if}~  i<j=N\\
\frac {1}{(-2 \kappa t)} \left( \sum_{l =i}^{N-1} c_l f'''(z_l) - (i-1) c_{i-1} f'''(z_{i-1}) \right)& \text {if}~ j<i \leq N-1\\
\frac {1}{(2 \kappa t)} (N-1) c_{N-1} f'''(z_{N-1})& \text {if}~  j<i=N
\end{cases}
\end{align*}
and when $i<j<k$
\begin{align*}
\partial_{(i)(j)(k)}\psi = 
\begin{cases}
\frac {1}{(-2 \kappa t)} \left( \sum_{l =k}^{N-1} c_l f'''(z_l) - (k-1) c_{k-1} f'''(z_{k-1}) \right)& \text {if}~ k \leq N-1\\
\frac {1}{2 \kappa t} (N-1) c_{N-1} f'''(z_{N-1})& \text {if}~  k= N
\end{cases}
\end{align*}

\subsection{$\psi$ is $C^2$ with Lipschitz Continuous Second Order Spatial Derivatives} 
\label{app:maxC2}

First, we show that the function $\psi$ defined by \eqref{eq:max_pde_soln} is $C^2$ in the spatial
variables $x_1, \ldots, x_N$. Since \eqref{eq:max_pde_soln} uses ranked coordinates, we can view
$\psi$ as being defined in the sector $\{ x_1 \geq x_2 \geq \cdots \geq x_N \}$ then extended
by symmetry to all ${\mathbb R}^N$.

The heart of the matter is the observation that {\it at each plane $x_k = x_{k+1}$ the normal
derivative of $\psi$ is zero}. Indeed, when $x_1 \geq x_2 \geq \cdots \geq x_N $ the formula \eqref{eq:max_pde_soln} involves two sums, $\sum_{i=1}^N x_i$ and $\sum_{l=1}^{N-1} c_l f(z_l)$.
The former certainly has normal derivative equal to zero at each of the sector's faces $x_k=x_{k+1}$,
so we may concentrate on the latter. Since $z_1 = x_1-x_2$ while $z_2, \ldots, z_{N-1}$ are symmetric
in $x_1$ and $x_2$, at the face $x_1=x_2$ (equivalently, $z_1 = 0$) the normal derivative is
a multiple of $f'(0)$, which vanishes since $f(z)$ is an even function of $z$
(see \eqref{eq:sturm_ode_soln}). Turning to the face $x_k = x_{k+1}$ with $k \geq 2$, we observe that
$z_1, \ldots, z_{k-2}$ do not involve $x_k$ or $x_{k+1}$ while $z_{k+1}, \ldots, z_{N-1}$ are
symmetric in $x_k$ and $x_{k+1}$; moreover $x_k = x_{k+1}$ is equivalent to $z_{k-1} = z_k$. Therefore
the normal derivative of $\psi$ is a multiple of
$$
c_k f'(z_k) - (k-1) c_{k-1} f'(z_k) + kc_k f'(z_k) = 0
$$
using the fact that $c_k = \frac{1}{k(k+1)}$.

To explain why this observation implies the $C^2$ continuity of $\psi$, it suffices to consider the
restriction of $\psi$ to $\{ x_1 + \cdots + x_N = 0 \}$ (since $\psi(x_1 + c, \dots, x_N + c, t)=\psi(x, t)+c$).
Changing variables to $y_k = x_k - x_{k+1}$ ($1 \leq k \leq N-1$), the $C^2$ character of
$\psi$ follows from the following calculus lemma applied to
$g(y_1, \ldots y_{N-1}) = \psi(x_1, \ldots, x_N, t)$ for any fixed $t$.

\begin{lemma}
For any $m \geq 1$, let $g(y_1, \ldots, y_m)$ be $C^2$ on the positive quadrant
$\{ y_i \geq 0 \ \mbox{for each $i$} \}$, and assume that $\partial_i g = 0$ at the face
$y_i =0$. Then the symmetric extension of $g$, 
$$
g (y_1, \ldots, y_m) = g(|y_1|, \ldots, |y_m|) ,
$$
is $C^2$ on all ${\mathbb R}^m$.
\end{lemma}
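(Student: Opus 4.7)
The plan is to reduce the multidimensional $C^2$ regularity to a one-dimensional statement and then glue the resulting pieces across the coordinate hyperplanes. First I would treat the base case $m=1$: if $g\in C^2([0,\infty))$ with $g'(0)=0$, then a direct computation shows that $G(y):=g(|y|)$ is $C^2$ on $\mathbb{R}$. The function is $C^2$ on each of $\{y>0\}$ and $\{y<0\}$ by composition; the one-sided limits of $G'$ at $0$ both equal $g'(0)=0$, and those of $G''$ both equal $g''(0)$. Existence of $G''(0)$ as a genuine two-sided derivative follows from a Taylor expansion of $g'$ near $0$ applied to the difference quotient $G'(y)/y$.

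For general $m$, I would partition $\mathbb{R}^m$ into the $2^m$ closed orthants $Q_\varepsilon=\{y:\varepsilon_i y_i\geq 0\text{ for every }i\}$ indexed by $\varepsilon\in\{\pm 1\}^m$. On the interior of $Q_\varepsilon$, the formula $G(y)=g(\varepsilon_1 y_1,\ldots,\varepsilon_m y_m)$ makes $G$ manifestly $C^2$. It therefore suffices to show that $G$ together with all its first and second partial derivatives extends continuously up to each closed $Q_\varepsilon$ and that these extensions agree on every shared face; the standard glueing lemma (a one-line consequence of the fundamental theorem of calculus across a hyperplane) then upgrades $G$ to a $C^2$ function on all of $\mathbb{R}^m$.

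The crucial check is matching across a single face $\{y_i=0\}$ separating $Q_\varepsilon$ and $Q_{\varepsilon'}$ (which differ only in the sign of $\varepsilon_i$). The values of $G$ and of $\partial_j G$ for $j\neq i$ agree automatically by symmetry in $y_i$. The value $\partial_i G$ equals $\pm(\partial_i g)$ evaluated on the face, which is zero by hypothesis. The pure second derivative $\partial_{ii}G$ agrees because the two sign factors square to $+1$. The mixed derivative $\partial_{ij}G$ with $j\neq i$ equals $\pm\varepsilon_i\varepsilon_j\,\partial_{ij}g$ on the two sides, so matching requires $\partial_{ij}g=0$ on the face; this is where the hypothesis is used a second time, since $\partial_i g\equiv 0$ along the face forces its tangential derivative $\partial_j(\partial_i g)=\partial_{ij}g$ to vanish there as well.

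The main subtlety I anticipate lies at points where several coordinate hyperplanes meet, where one must verify that all relevant orthant extensions agree simultaneously and that the glued second partial derivatives are genuinely continuous, not merely consistent in a pairwise sense. I would handle this either by induction on $m$, applying the $m=1$ base case slice-wise in each coordinate direction, or equivalently by noting that the pairwise face-matching described above propagates transitively so that every orthant's extension agrees at every lower-dimensional stratum; continuity of the resulting second derivatives is then immediate from their explicit expressions in terms of $g$ and its derivatives evaluated at $(|y_1|,\ldots,|y_m|)$.
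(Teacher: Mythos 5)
Your proposal is correct and follows essentially the same route as the paper's proof: the one-dimensional reflection base case, the key observation that $\partial_{ij}g$ vanishes on the face $y_i=0$ because it is a tangential derivative of $\partial_i g\equiv 0$ along that face, and a reduction of the general case to the $m=1$ and $m=2$ checks applied slicewise (you phrase this as orthant gluing, the paper as applying the low-dimensional arguments along lines and planes, but the content is the same). You are somewhat more careful about the existence of the glued second derivative at the interface and about multi-face intersections, which the paper leaves implicit; otherwise the two arguments coincide.
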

\begin{proof}
The case $m=1$ is familiar: for $y_1 < 0$ we have $g'(y_1) = - g'(-y_1)$ and
$g''(y_1) = g''(-y_1)$. If $g'(0)=0$ then $g$ and its first and second derivatives match
at $y_1 = 0$, and it follows that $g$ is $C^2$.

The case $m=2$ similar. At the face $y_1=0$ of the positive quadrant we
have $\partial_1 g (0,y_2) = 0$ by hypothesis, and therefore $\partial_{12} g(0,y_2) = 0$
by differentiation with respect to $y_2$; similarly, $\partial_{12} g = 0$ at the face
$y_2 = 0$. It follows that the first and second derivatives of the extension of $g$ are all continuous
across the planes $y_1 = 0$ and $y_2 = 0$. So $g$ is $C^2$.

The general case is essentially the same. To see that $\partial_i g$ and $\partial_{ii} g$
are continuous it suffices to apply the argument used for $m=1$ along the line obtained by holding
all variables except $y_i$ constant. To see that $\partial_{ij} g$ is continuous for $i \neq j$
it suffices to apply the argument used for $m=2$ in the plane obtained by holding all variables
except $y_i$ and $y_j$ constant.
\end{proof}

We next show $\psi$ is not $C^3$. Suppose $x_1>x_2>x_3>x_4...>x_N$ then since
$\psi(x_1,x_2,x_3...,x_N)=\psi(x_1,x_3,x_2...,x_N)$ we have $\partial_{222}\psi(x_1,x_2,x_3...,x_N)=\partial_{333}\psi(x_1,x_3,x_2...,x_N)$. However 
\[
\partial_{222}\psi(x_1,x_2,x_3...,x_N)-\partial_{333}\psi(x_1,x_2,x_3...,x_N)=\frac{3}{2}f'''(z_2)-\frac{1}{2}f'''(z_1)
\]
which does not approach to 0 when $x$ approaches to $\{x_1>x_2=x_3>...>x_N\}$. This means $\partial_{333}\psi$ cannot be continuously extended to the boundary $\{x_1>x_2=x_3>...>x_N\}$. 

Finally, we show the boundedness of third order derivatives. Note that for $z \geq 0$,
\[
-  \sqrt {\frac {2}{ e \pi }} \leq f'''(z) =  - \sqrt {\frac{2} {\pi  }} ze^ {- \frac {z^2}{2}} \leq 0
\]
From Appendix \ref{app:Derivatives of the max-based potential} we have
\begin{align*}
\begin{cases}
\frac {1}{2 \kappa t}  \sqrt {\frac {2} {e \pi }}\left (\frac {1}{i} -\frac {1}{N}  \right ) \leq \partial_{(i)(i)(i)} \psi  \leq \frac {1}{(-2 \kappa t)}\sqrt { \frac { 2}{ e \pi }}\frac {(i-1)^2}{i}\\
\frac {1}{2 \kappa t}\sqrt { \frac { 2}{ e \pi }}(1-\frac{1}{N})\leq \partial_{(i)(j)(j)} \psi \leq 0& \text {if}~  i<j\\
\frac {1}{2 \kappa t}  \sqrt {\frac {2} {e \pi }}\left (\frac {1}{i} -\frac {1}{N}  \right )\leq \partial_{(i)(j)(j)} \psi \leq \frac {1}{(-2 \kappa t)}\sqrt { \frac { 2}{ e \pi }}\frac {1}{i}& \text {if}~  i>j\\
\frac {1}{2 \kappa t}  \sqrt {\frac {2} {e \pi }}\left (\frac {1}{k} -\frac {1}{N}  \right )\leq \partial_{(i)(j)(k)} \psi \leq \frac {1}{(-2 \kappa t)}\sqrt { \frac { 2}{ e \pi }}\frac {1}{k}& \text {if}~  i<j<k
\end{cases}
\end{align*}

\subsection{Max Potential $\psi$ Satisfies \eqref{eq:max_pde} }
\label{app:max_pde_soln}
First, note that $\lim_{z \rightarrow \infty}  f \left (z \right )/z = 1$, and therefore, the final value condition is satisfied.  
\begin{align*}
\lim_{t \rightarrow 0} \psi(x,t) &=  \frac {1}{N} \langle x, \mathbb 1 \rangle +  \sum_{i =1} ^{N-1} \frac {1}{i(i+1)}  \left ( \left ( \sum_{j=1}^i x_{(j)}\right)  - i x_{(i)+1} \right)  \\
 & = \frac {1}{N} \langle x, \mathbb 1 \rangle +  \sum_{j=1}^{N-1}x_{(j)} \left ( \sum_{i =j}^{N-1} \frac {1}{i(i+1)}  \right )-\sum_{i =1} ^{N-1}   \frac { x_{(i)+1} }{i+1} \\
  & = \frac {1}{N} \langle x, \mathbb 1 \rangle +  \sum_{j=1}^{N-1}x_{(j)} \left ( \frac {1}{j} - \frac {1}{N} \right )-\sum_{i =2} ^{N}   \frac { x_{(i)} }{i} \\
  & = \frac {1}{N} \langle x, \mathbb 1 \rangle + x_{(1)} -  \left (\sum_{j=1}^{N-1} \frac {x_{(j)}}{N} \right) -   \frac { x_{(N)} }{N}   \\
  & = x_{(1)}=\max_i(x_i)
\end{align*}

Since $x_{(1)} \geq x_{(2)} \geq ... \geq x_{(N)}$,  we have $0 \leq z_1 \leq z_2 \leq ... \leq z_{N-1}$ and, therefore, $ \sqrt{\frac{2}{\pi}} \geq  f''(z_1) \geq f''( z_2) \geq ... \geq f''(z_{N-1})\geq 0$. This by a straightforward computation gives for $i\leq N-1$, 
\[
\partial_{(i)(i)}\psi-\partial_{(i+1)(i+1)}\psi=(1-\frac{1}{i})(f''(z_{i-1})-f''(z_i))\geq 0
\]
Therefore, $\max_{ i }  \partial^2_i \psi   = \partial^2_{(1)} \psi =\partial^2_{(2)} \psi$.  Finally,  $\psi_t =  -\frac{ \sqrt {\kappa }}{\sqrt {-2t}} \sum_{l=1}^{N-1} c_l  f''(z_l)$ and thus $\psi_t + \kappa  \partial^2_{(1)} \psi=  0$.

\section{Lower Bound Max Potential: Diffusion Factor } 
\label{app:max_lb} 
The linearity in the direction of $\mathbb 1$ implies that  $ \sum_{i} \partial_i \psi =1$, and therefore $D^2 \psi \cdot \mathbb 1 = 0$.  Suppose  $x_{(1)}=x_i$, in Appendix \ref{app:max_pde_soln} we show that  $\max_{ j }  \partial^2_j \psi   = \partial^2_i \psi$. Therefore, $\pm\langle D^2 \psi \cdot q^m,  q^m \rangle  = \pm \langle D^2 \psi \cdot \pm (q^m,  q^m) \rangle = 4 \partial_{ii} \psi  = 4 \max_j \partial_{jj} \psi$.

\section{Upper Bound Max Potential: Diffusion Factor $\kappa_m$}
\label{app:max_kappa} 
We note that since $f$ is convex, $\psi$ is convex. Therefore, $\max_{q \in [-1 ,1]^N} \langle D^2 \psi \cdot q, q \rangle$ is attained at the vertices of the hypercube $ \{\pm 1\}^N$.  
Without loss of generality we assume $x_1\geq x_2...\geq x_N$. From Appendix \ref{app:max_pde}, we see that $D^2 \psi$ has a special structure: $\partial_{ik} \psi = \partial_{jk}\psi $ for all $i, j<k$ and $\partial_{ij} \psi \leq \partial_{ik} \psi \leq 0$ for $i<j<k$.  In the remainder of this Appendix we use this structure to prove that a class of simple rank-based strategies maximizes the quadratic form $\max_{q\in\{\pm 1\}^N}\langle D^2\psi \cdot q, q\rangle$,\footnote{This class includes the comb strategy.} and compute the $\kappa_m$ such that 
\[
\frac{1}{2}\max_{q\in\{\pm 1\}^N}\langle D^2 \psi\cdot q, q\rangle \leq\kappa_m\max_i\partial_{ii}\psi
\]
 From Appendix \ref{app:Derivatives of the max-based potential} we know that for $i<j$ $\partial_{ij}\psi$ is a function of $j$ alone, thus we denote $a_j=-\partial_{ij}\psi$ for any $i<j$. Also,
\begin{align*}
\partial_{ij}\psi= \frac {1 }{\sqrt {-2 \kappa t}}  \left ( \sum_{l= j}^{N-1} c_l f''(z_l)  -  (j-1) c_{j-1} f''(z_{j-1}) \right)\leq\frac {1 }{\sqrt {-2 \kappa t}}\frac{ f''(z_j)- f''(z_{j-1})}{j}\leq 0
\end{align*}
and for $i<j<k$ 
\begin{align*}
\partial_{ij}\psi-\partial_{ik}\psi=& \frac {1 }{\sqrt {-2 \kappa t}} \left ( \left ( \sum_{l= j}^{k-1} c_l f''(z_l) \right)  -  (j-1) c_{j-1} f''(z_{j-1})+(k-1) c_{k-1} f''(z_{k-1}) \right)\\
\leq&\frac {1 }{\sqrt {-2 \kappa t}}\left (  f''(z_j)\left (\frac{1}{j}-\frac{1}{k}\right) -  \frac{f''(z_{j-1})}{j} +\frac{f''(z_j)}{k}  \right)\leq 0
\end{align*}
thus $a_2\geq a_3...\geq a_N\geq 0$.\\

\begin{theorem}\label{thm:optimal strategy for max potential}
For the max potential $\psi$ on $\{x|x_1\geq x_2...\geq x_N\}$, $\max_{q\in\{\pm 1\}^N}\langle D^2\psi\cdot q, q\rangle$ is obtained by strategies satisfying $q_{2i-1}+q_{2i}=0$, $\forall 2i\leq N$. Specifically, comb strategy $q^c$ achieves the maximum.  
\end{theorem}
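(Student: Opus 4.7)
Since $f$ is convex, $\psi$ is convex in $x$, so $\langle D^2\psi\cdot q,q\rangle$ attains its maximum over $[-1,1]^N$ at a vertex $q\in\{\pm 1\}^N$; hence $q_i^2=1$ throughout. On the sector $\{x_1\ge\cdots\ge x_N\}$ we already know from the paragraphs preceding the theorem that $\partial_{ij}\psi=-a_j$ whenever $i<j$, with $a_2\ge a_3\ge\cdots\ge a_N\ge 0$. Combined with $D^2\psi\cdot\mathbb{1}=0$ (which follows from $\psi(x+c\mathbb{1},t)=\psi(x,t)+c$), this gives $\partial_{ii}\psi=(i-1)a_i+\sum_{j>i}a_j$. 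My first step is to substitute these formulas into the quadratic form and collect constant and variable parts:
\[
\langle D^2\psi\cdot q,q\rangle \;=\; \sum_i \partial_{ii}\psi \;-\; 2\sum_{j=2}^{N} a_j\, q_j\,S_{j-1},
\]
where $S_k=\sum_{i=1}^k q_i$ are the partial sums.

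Next I would rewrite the off-diagonal part using $q_jS_{j-1}=\tfrac12(S_j^2-S_{j-1}^2-1)$ (valid since $q_j^2=1$) and apply Abel summation to telescope $\sum_j a_j(S_j^2-S_{j-1}^2)$. A routine rearrangement shows
\[
\langle D^2\psi\cdot q,q\rangle \;=\; K \;-\; \sum_{j=2}^{N-1}(a_j-a_{j+1})\,S_j^2 \;-\; a_N\,S_N^2,
\]
where $K$ is a constant independent of $q$ (namely $K=\sum_i \partial_{ii}\psi+\sum_{j=2}^N a_j +a_2 - a_2\cdot S_1^2$, and $S_1^2=1$). This is the heart of the argument and also the step requiring the most care: one has to track which boundary terms from the summation by parts survive and verify that the remaining coefficients $a_j-a_{j+1}$ (for $2\le j\le N-1$) and $a_N$ are all nonnegative. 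This nonnegativity is exactly the ordering $a_2\ge\cdots\ge a_N\ge 0$ recalled above, so maximizing $\langle D^2\psi\cdot q,q\rangle$ over $\{\pm 1\}^N$ reduces to minimizing each $S_j^2$ simultaneously, $2\le j\le N$.

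Finally I would check that the claimed class of strategies does minimize every $S_j^2$ at once. If $q$ satisfies $q_{2i-1}+q_{2i}=0$ whenever $2i\le N$, then $S_{2i}=0$ for every even $2i\le N$ (so $S_{2i}^2=0$, its absolute minimum) and $S_{2i+1}=q_{2i+1}\in\{\pm 1\}$ for every odd index (so $S_{2i+1}^2=1$, the minimum achievable since a sum of an odd number of $\pm 1$'s cannot vanish). For the boundary term $S_N^2$: if $N$ is even it equals $0$; if $N$ is odd it equals $1$, which is again the minimum. Hence every such $q$ attains the maximum. The comb vector $q^c$ has $q^c_{(2i-1)}=1$ and $q^c_{(2i)}=-1$, so $q^c_{2i-1}+q^c_{2i}=0$ for all $2i\le N$, and therefore $q^c$ lies in this optimal class, completing the proof.
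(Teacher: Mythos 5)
Your argument is correct and takes a genuinely different route from the paper. The paper's proof reinterprets $D^2\psi$ as a graph Laplacian, converts the quadratic-form maximization into a max-cut problem, and then establishes a combinatorial max-cut theorem for the special class of ``rank-weighted'' graphs by a somewhat delicate induction (including a separate lemma that every max cut must separate vertices $1$ and $2$). You avoid all of that by a direct algebraic manipulation: the substitution $q_jS_{j-1}=\tfrac12(S_j^2-S_{j-1}^2-1)$ plus Abel summation rewrites $\langle D^2\psi\cdot q,q\rangle$ as a constant minus a nonnegative linear combination of the squared partial sums $S_j^2$, after which the parity constraints $S_j^2\ge \mathbb{1}_{\{j \text{ odd}\}}$ and the simultaneous attainability of equality by any $q$ with $q_{2i-1}+q_{2i}=0$ finish the proof. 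Your approach is shorter, entirely elementary, avoids the graph-theoretic detour, and makes the source of the comb strategy's optimality transparent; the paper's route has the side benefit of connecting the problem to a known combinatorial result on max cuts, which the authors use to contextualize $\kappa_h$ in Appendix H. One small slip: your stated value of $K$ has a sign error (it should read $\sum_i\partial_{ii}\psi+\sum_{j=2}^Na_j+a_2S_1^2$, not $\dots+a_2-a_2S_1^2$), but since $S_1^2\equiv 1$ this has no effect on the argument.
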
 

\begin{proof}
As shown in Appendix \ref{app:heat_lb_kappa}, we can view $D^2\psi$ as the Laplacian of an undirected weighted graph $G$ with $N$ vertices. The edge weight $w_{ij}=-\partial_{(i)(j)}\psi=a_j$ for $i<j$ and $a_2\geq a_3...\geq a_N$. Also, as shown
\begin{align*}
\max_{q\in\{\pm 1\}^N}\langle D^2\psi\cdot q, q\rangle=4max\_cut(G)
\end{align*}
Thus, we converted the problem of maximizing a quadratic to the problem finding the max cut for a special weighted graph. The Theorem proved below gives us the desired result. 
\end{proof}

\begin{theorem}\label{thm:max-cut theorem}
Consider an undirected graph with vertices $\{1,...,N\}$ satisfying for any edge $(i,j)$ the weight depends on $\max(i,j)$, i.e. we can write $w_{ij}=a_j$ for $i<j$. Also suppose $a_2\geq a_3...\geq a_N$ , then the max cut, modulo permutations between vertices $(i,j)$ such that $a_i=a_j$, is any cut dividing $2i-1$ and $2i$ for all $1\leq i\leq \lfloor\frac{N}{2}\rfloor$.
\end{theorem}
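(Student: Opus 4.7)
The plan is to reduce the weighted max-cut to a sequence of concave prefix-count maximizations via Abel summation, using only the monotonicity $a_2 \geq \cdots \geq a_N \geq 0$. Encode a cut by $\sigma \in \{\pm 1\}^N$ and write the cut weight as $W(\sigma) = \sum_{j=2}^N a_j n_j$, where $n_j = |\{i<j : \sigma_i \neq \sigma_j\}|$ is the number of cut edges whose larger endpoint equals $j$. Setting $a_{N+1} := 0$ and $b_k := a_k - a_{k+1} \geq 0$, summation by parts gives
\[
W(\sigma) \;=\; \sum_{j=2}^N \Bigl(\sum_{k=j}^N b_k\Bigr) n_j \;=\; \sum_{k=2}^N b_k\, C_k(\sigma), \qquad C_k(\sigma) := \sum_{j=2}^k n_j,
\]
where $C_k(\sigma)$ is the total number of cut edges with both endpoints in $\{1,\ldots,k\}$.

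Next I would observe that $C_k(\sigma) = m_k(k - m_k)$ with $m_k := |\{i \leq k : \sigma_i = +1\}|$, since an edge in $\{1,\ldots,k\}$ is cut exactly when its endpoints have opposite signs. Concavity of the integer function $m \mapsto m(k-m)$ gives $C_k(\sigma) \leq \lfloor k/2\rfloor \lceil k/2\rceil$; multiplying by $b_k \geq 0$ and summing yields a uniform upper bound on $W(\sigma)$. This bound is attained iff equality holds in every term with $b_k > 0$: for odd $k$ this is automatic, since $m_k \in \{m_{k-1}, m_{k-1}+1\}$ always lands at one of the two maximizers $(k\pm 1)/2$; for even $k = 2i$ equality forces $m_{2i} = i$.

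To finish, note that $m_{2i} - m_{2i-2} = \mathbf{1}[\sigma_{2i-1}=+1] + \mathbf{1}[\sigma_{2i}=+1]$, so the chain of constraints $m_{2i}=i$ for all $i \leq \lfloor N/2 \rfloor$ is equivalent to $\sigma_{2i-1} \neq \sigma_{2i}$ for every such $i$, which is exactly the claimed separation; the comb strategy $q^c$ is one such $\sigma$. When some $b_k = 0$ (i.e., $a_k = a_{k+1}$) the upper bound is insensitive to $C_k$, and this slack precisely corresponds to permuting indices with equal $a$-values, as asserted in the theorem.

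The main obstacle is the second step: recognizing that the telescoping weights $b_k = a_k - a_{k+1}$ turn the cut value into a \emph{nonnegative} combination of subgraph-cut sizes, since this is what enables termwise optimization and is exactly where the hypothesis $a_2 \geq a_3 \geq \cdots \geq a_N$ is used. Once that observation is in place, the rest is elementary, with the parity condition $\sigma_{2i-1} \neq \sigma_{2i}$ falling out of the concavity of $m(k-m)$ evaluated at even prefixes.
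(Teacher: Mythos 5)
Your proof is correct and takes a genuinely different route from the paper's. The paper argues by induction on $N$: it first establishes, via a moving-vertex exchange argument with a fairly delicate case analysis, that every max cut must separate vertices $1$ and $2$; then it shows that the cut weight decouples as $\sum_{i\geq 3} a_i + T(S_1,S_2)$, so the restriction to $\{3,\ldots,N\}$ must itself be a max cut, and the inductive hypothesis finishes. Your approach replaces the induction and the exchange lemma with Abel summation: writing $W(\sigma)=\sum_k b_k C_k(\sigma)$ with $b_k = a_k - a_{k+1} \geq 0$ turns the cut value into a conic combination of prefix-subgraph cut sizes $C_k = m_k(k-m_k)$, which you can optimize termwise. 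The monotonicity hypothesis then appears transparently as the positivity of the coefficients $b_k$, the parity condition $\sigma_{2i-1}\neq\sigma_{2i}$ falls out of the constraint chain $m_{2i}=i$, and the odd-index constraints are automatically implied by the even ones. This is arguably cleaner and more revealing: it gives a closed-form optimal value $\sum_k b_k \lfloor k/2\rfloor\lceil k/2\rceil$ and a direct characterization of \emph{all} maximizers, whereas the paper's exchange argument only pins down the top pair per inductive step. Both proofs handle the degenerate case $a_k=a_{k+1}$ informally (the paper via ``WLOG strict'', you by noting the slack $b_k=0$ frees $C_k$), so you are at parity there. One small point to make fully explicit: the Abel identity uses the convention $a_{N+1}:=0$ and hence $b_N=a_N\geq 0$, which relies on the (implicit but correct) nonnegativity of the edge weights.
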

\begin{proof}
Without loss of generality, assume $a_2>a_3...>a_N$. We use induction on N. For $N=2$ and $N=3$ it is straight forward to check that the max cut is any cut dividing 1 and 2.\\

For $N+1$ points, we first prove the max cut must divide 1 and 2. 
\begin{lemma}\label{max cut lemma}
Any max cut must divide 1 and 2.
\end{lemma}

\begin{proof}[Proof of lemma \ref{max cut lemma}]
Assume a max cut doesn't divide 1 and 2, denote 
\begin{align*}
\begin{cases}
L=\{i\in\{3,...,N\}|\text{i on the same side as 1 and 2}\}\\
R=\{i\in\{3,...,N\}|\text{i on the other side}\}
\end{cases}
\end{align*}
by definition $R$ is nonempty.\\

Define $A_L=\sum_{j\in L}a_j$ and $A_R=\sum_{j\in R}a_j$. If $A_R<A_L+a_2$ then by moving 2 to $R$ the cut will get bigger since 
\begin{align*}
T(\{1\}\cup L,\{2\}\cup R)=T(\{1,2\}\cup L, R)+a_2+A_L-A_R> T(\{1,2\}\cup L, R) 
\end{align*}
which is a contradiction.\\

So $A_R\geq A_L+a_2$.  We denote $p_i=\{2i-1,2i\}$, $2\leq i\leq \lfloor\frac{N}{2}\rfloor$. If no $p_i$ satisfies $p_i\subset R$, then 
\begin{align*}
A_R-A_L\leq (a_3-a_4)+(a_5-a_6)+...<a_2 
\end{align*}
which is a contradiction. Thus, we can assume $p_k$ is the smallest set contained in $R$. We prove that by moving 2 to $R$ and $2k-1$ to $L$ the cut will get bigger. Actually
\begin{align*}
T(\{1,2k-1\}\cup L,\{2\}\cup R\setminus\{2k-1\})=&T(\{1,2\}\cup L,R)+a_2+(|R_{2k-1}|-|L_{2k-1}|-1)a_{2k-1}\\
&+A_{L_{2k-1}}-A_{R_{2k-1}}
\end{align*}
where
\begin{align*}
\begin{cases}
L_{2k-1}=L\cap\{3,...,2k-1\}\\
R_{2k-1}=R\cap\{3,...,2k-1\}
\end{cases}
\end{align*}
and $A_{L_{2k-1}}$, $A_{R_{2k-1}}$ are defined under the same convention as $A_L$, $A_R$.\\

By definition of $k$ for any $p_i$ such that $2\leq i\leq k-1$, if one of the element is in $R_{2k-1}$ then the other must be in $L_{2k-1}$. Suppose $R_{2k-1}$ contains elements of $p_{i_1},...,p_{i_{|R_{2k-1}|-1}}$ and $2k-1$, then  
\begin{align}
&a_2+(|R_{2k-1}|-|L_{2k-1}|-1)a_{2k-1}+A_{L_{2k-1}}-A_{R_{2k-1}} \nonumber\\
~~&\geq a_2+\sum_{j=1}^{|R_{2k-1}|-1}a_{2i_j}-\sum_{j=1}^{|R_{2k-1}|-1}a_{2i_j-1}-a_{2k-1} \label{first_line}\\
~~&~~~~+\left (\sum_{l\in L_{2k-1}\setminus\cup_{j=1}^{|R_{2k-1}|-1} p_{i_j}}a_l \right) \label{second_line}\\
~~&~~~~-(|L_{2k-1}|-|R_{2k-1}|+1)a_{2k-1} \label{third_line}
\end{align}
We can rearrange the sum in \eqref{first_line}
\begin{align*}
&a_2+\sum_{j=1}^{|R_{2k-1}|-1}a_{2i_j}-\sum_{j=1}^{|R_{2k-1}|-1}a_{2i_j-1}-a_{2k-1}\\
=&(a_2-a_{2i_1-1})+(a_{2i_1}-a_{2i_2-1})+...(a_{2i_{|R_{2k-1}|-1}}-a_{2k-1})>0 
\end{align*}
Also notice that each $a_l>a_{2k-1}$ for $l\in L_{2k-1}\setminus\cup_{j=1}^{|R_{2k-1}|-1} p_{i_j}$ and 
\[
|L_{2k-1}\setminus\cup_{j=1}^{|R_{2k-1}|-1} p_{i_j}|=|L_{2k-1}|-|R_{2k-1}|+1
\]
which implies that \eqref{second_line} plus \eqref{third_line} is positive. This demonstrates that the new cut is strictly bigger which is a contradiction. 
\end{proof}

Returning to the proof of Theorem \ref{thm:max-cut theorem}, denote
\[
S_i=\{j\in\{3,...,N\}|\text{$j$ on the same side as $i$}\}  
\]
for $i=1,2$. 

Also denote $T(A,B)$ as the total weights of edges between $A$ and $B$. Then
\begin{align*}
 T(\{1\}\cup S_1,\{2\}\cup S_2)=\sum_{i=3}^Na_i+T(S_1,S_2)
\end{align*}
Thus $(S_1,S_2)$ must be the max cut for $\{3,...,N\}$ as well. By induction hypothesis the max cut divides $2i-1$ and $2i$ for $2\leq i\leq [\frac{N}{2}]$.
\end{proof}

Now we use Theorem \ref{thm:optimal strategy for max potential} to compute $\kappa_m$. Using the same notation $a_i$ as above, since the comb strategy $q^c$ attains the maximum, 
\begin{align*}
\max\langle D^2 \psi\cdot q, q\rangle=&\langle D^2 \psi \cdot q^c, q^c\rangle
=&
\begin{cases}
\sum_{i=1}^{M-1}4i(a_{2i}+a_{2i+1})+4Ma_{2k}& N=2M\\
\sum_{i=1}^M4i(a_{2i}+a_{2i+1})& N=2M+1
\end{cases}
\end{align*}
Notice that
\begin{align*}
\max_i\partial_{ii}\psi=\partial_{11}\psi=\sum_{i=2}^Na_i
\end{align*}
Taking
\begin{align*}
\kappa_m=&\max_{a_2\geq a_3...\geq a_N\geq 0}\frac{\frac{1}{2}\langle D^2\psi\cdot q^c, q^c\rangle}{\partial_{11}\psi}=
\begin{cases}
\frac{N^2}{2(N-1)}&N\ even\\
\frac{N+1}{2}&N\ odd
\end{cases}
\end{align*}
the max is obtained when $a_2=a_3...=a_N$.

\section{Max Potential Error Terms}
\label{app:max_bound} 

In this Appendix, we compute the error terms for the heat potential $\psi$ given by \eqref{eq:max_pde_soln}.   In Appendix \ref{app:lb_err_max}, we determine the lower bound error $E^{\psi}_{l.b.}$ for $\psi$ with $\kappa =2$ associated with the adversary $a^m$, and in Appendix \ref{app:ub_err_max}, we determine the upper bound error $E^{\psi}_{u.b.}$ with $\kappa=\kappa_m$ given by \eqref{eq:lower_bound_max_factor}.

\subsection{Max Potential: Lower Bound Error}
\label{app:lb_err_max}

To apply Theorem \ref{thm:fixed_lb} with respect to  the max potential $\psi$, and the associated adversary $a^m$ we determine the ``error" term  $E^{\varphi}_{l.b.}(t) = C_{l.b.}+ \sum _{\tau = t}^{-2}  K_{l.b.}(\tau)$ where  $C_{l.b.}$ is a constant satisfying  $\psi(x,-1) - \min_{p} \mathbb E_{a_{-1},p} ~\psi (x + r,  0)  \leq C_{l.b.}$ for all $x$, and $K_{l.b.}$ is a function satisfying
 \begin{align*}
 \frac {1}{2}\text{ess sup}_{  \bar \tau \in [\tau, \tau+1] }   \psi_{tt}(x,\bar \tau ) + \frac {1}{6}   \text{ess sup}_{y \in [x, x \mp q^m]} ~\pm D^3\psi(y,\tau+1) [q^m,q^m,q^m]   \leq K_{l.b.}(\tau)
  \end{align*}
for all $\tau \in [t,  -2]$ and all $x$.

In Appendix \ref{app:max_final_period}, we show that $ \psi(x,-1) - \psi(x+r,0)  \leq C_{l.b.}$ for all $x$ and $r$ where $C_{l.b.} = 2 +2\sqrt {\frac{ \kappa}{\pi}}  \frac {N-1}{N} $.  
In Appendix \ref {app:psi_time_deriv_bound}, we prove that  $\psi_{tt}(x,  \tau)  \leq \frac{K^{l.b.}_2}{ |\tau|^{\frac{3}{2}}}$ for all $x$ and $\tau \leq - 1$ where $K^{l.b.}_2 =   \frac{N-1} {N}   \frac {\sqrt {\kappa}}{2 \sqrt {\pi  }}$. Finally, in Appendix \ref {app:psi_third_deriv_lb_err}, we show that $\text{ess sup}_{y  \in [x, x\mp q^m]}  \pm D^3\psi( y, t+1) [ q^m, q^m, q^m] \leq   \frac{1} {|t|} K^{l.b.}_3$  where $K^{l.b.}_3 = \frac {4}{ \kappa } \frac {(N-1)^2}{N}\sqrt { \frac { 2}{ e \pi }}$.  Therefore,  $K_{l.b.}(\tau)  = \frac {1}{2} \frac{K^{l.b.}_2}{ |\tau+1|^{\frac{3}{2}}}+  \frac {1}{6} \frac{1} {|t+1|} K^{l.b.}_3$ and 
\begin{align*}
\sum_{\tau = t}^{-2} K_{l.b.}(\tau) \leq   \frac{K^{l.b}_2}{2} \left (3 - \frac{2}{\sqrt{|t| - 1}}\right) + \frac{K^{l.b}_3} {6} (1+\log(|t| - 1))
 \end{align*}
The foregoing shows that for $\kappa=2$, $E^{\psi}_{l.b.}(t) = O(N  \log |t|)$.

\subsubsection{Bounds on $\psi(x,-1) - \psi(x+r,0)$}
\label{app:max_final_period} 

We decompose the difference as follows
\[
\psi(x+r,0) - \psi(x,-1)  = \max_i(x+r)_i -  \max_ix_i +\psi(x,0) - \psi(x,-1) 
\]
Since $r=q_I\mathbb 1-q \in[-2,2]^N$, we obtain $-2\leq \max_i(x+r)_i -  \max_i(x_{-1})_i\leq 2$. Also, for any $x$,
\[
\psi(x,0) - \psi(x,-1)  = x_{(1)} -\frac {1}{N} \sum_{l=1}^N x_{(l)} -    \sqrt {2 \kappa} \sum_{l=1}^{N-1} c_l f(z_l)
=  \sqrt {2 \kappa} \sum_{l=1}^{N-1} c_l z_l -    \sqrt {2 \kappa} \sum_{l=1}^{N-1} c_l f(z_l)
\]
Since $-\sqrt{\frac{2}{\pi}} \leq z -f(z) \leq 0$ for $z \geq 0$,
\begin{align*}
 -2\sqrt {\frac{\kappa}{\pi}}  \frac {N-1}{N}  \leq \psi(x,0) - \psi(x,-1) \leq 0 
\end{align*}
 This implies that
\begin{align*}
-2 -2\sqrt {\frac{ \kappa}{\pi}}  \frac {N-1}{N}  \leq \psi(x+r,0) - \psi(x,-1)\leq 2  
\end{align*}

\subsubsection{Bounds on $\psi_{tt}(x,\tau)$}
\label{app:psi_time_deriv_bound}
We have
\begin {align*}
\psi_{tt}& = \frac{  \sqrt{\kappa}}{2 \sqrt {2} (-\tau)^{\frac{3}{2}}} \sum_{l=1}^{N-1} c_l  f''(z_l)   +\frac{ \sqrt{\kappa}}{\sqrt {-2\tau}} \sum_{l=1}^{N-1} c_l  f'''(z_l) \frac {z_l}{-2\tau} \\
&=\frac{ \sqrt{\kappa}}{2 \sqrt {2} (-\tau)^{\frac{3}{2}}} \sum_{l=1}^{N-1} c_l  \left (f''(z_l)   + f'''(z_l) z_l \right)\\
&=\frac{ \sqrt{\kappa}}{2 \sqrt {2} (-\tau)^{\frac{3}{2}}} \sum_{l=1}^{N-1} c_l  \left (1   - z_l^2 \right)   \sqrt {\frac{2} {\pi  }} e^ {- \frac {z_l^2}{2}} 
\end{align*}
Note that for all $z$, $-  2 e ^{-\frac 3 2}  \leq \left (1-{z^2}  \right) e^{- \frac {z^2}{2} } \leq 1$. Therefore, for all $x$ and $\tau \leq - 1$,
\[
-\frac{ 1}{(-\tau)^{\frac{3}{2}}} \frac{N-1} {N} \sqrt{   \frac{\kappa} {e^3 \pi  }}\leq \psi_{tt}  \leq\frac{ 1}{(-\tau)^{\frac{3}{2}}} \frac{N-1} {N}   \frac {\sqrt {\kappa}}{2 \sqrt {\pi  }}
\]

\subsubsection{Upper Bound of $ \text{ess sup}_{y  \in [x, x\mp q^m]} \pm D^3\psi( y, t+1) [ q^m, q^m, q^m] $}
\label {app:psi_third_deriv_lb_err}
Without loss of generality assume $x_1\geq x_2\geq...\geq x_N$, then $q^m=(1,-1...,-1)$ and $q$ in the support of $a^m$ is either $q^m$ or $-q^m$. We give an upper bound of 
\[
\text{ess sup}_{y  \in [x, x\mp q^m]} \pm D^3\psi( y, t+1) [ q^m, q^m, q^m]
\]

Since $\psi$ is linear along $\mathbb 1$, $
D^3\psi( y, t+1) [ q^m, q^m, q^m]=D^3\psi( y, t+1) [ q^m+\mathbb 1, q^m+\mathbb 1, q^m+\mathbb 1]=8\partial_{111}\psi(y,t+1)$. If $q=-q^m$, then $[x,x+q^m]\subset\{x|x_1\geq x_2\geq...\geq x_N\}$. For $y\in[x,x+q^m]$ 
\begin{align*}
&D^3\psi(y,t+1)[-q^m,-q^m,-q^m]=-8\partial_{111}\psi(y,t+1)\\
&=-8\partial_{(1)(1)(1)}\psi(y,t+1)\leq \frac {4}{- \kappa (t+1)} \sqrt {\frac {2} {e \pi }} \frac {N-1}{N} 
\end{align*}

If $q=q^m$, suppose $x_2+1\geq x_3+1...x_k+1\geq x_1-1\geq x_{k+1}+1...\geq x_N+1$, $k$ ranges from 1 to $N$.  We can accordingly partition $[x,x-q^m]$ into $k$ subintervals $I_1...I_k$ such that $y_1$ ranks $l$'s for $y\in I_l$. Thus, in each subinterval  
\[
D^3\psi(y,t+1)[q^m,q^m,q^m]=8\partial_{111}\psi(y,t+1)=8\partial_{(l)(l)(l)}\psi(y,t+1)\leq\frac {4}{- \kappa (t+1)} \frac {(l-1)^2}{l}\sqrt { \frac { 2}{ e \pi }} 
\]

Summarizing the above, we have
\[
\text{ess sup}_{y  \in [x, x\mp q^m]}  D^3\psi( y, t+1) [\pm q^m,\pm q^m,\pm q^m]\leq \frac {4}{- \kappa (t+1)} \frac {(N-1)^2}{N}\sqrt { \frac { 2}{ e \pi }}
\]

\subsection{Max Potential: Upper Bound Error}
\label{app:ub_err_max}
To apply Theorem \ref{thm:fixed_ub} with respect to the max potential $\psi$, we also need to determine the error term  $E^{\psi}_{u.b.}(t) = C_{u.b.}+ \sum _{\tau = t}^{-2}  K_{u.b.}(\tau)$ where $C_{u.b.}$ is a constant satisfying   $\max_{a} \mathbb E_{a, p_{-1}} ~\psi  ( x + r,  0)  - \psi(x,-1)  \leq C_{u.b.}$ for all $x $ and $K_{u.b.}$ is a function $K_{u.b.}$. 
 \[ 
 -  \frac {1}{2} \text{ess inf}_{\bar \tau \in [\tau, \tau+1] }  w_{tt}(x,\bar \tau) -\frac {1}{6}   \text{ess inf}_{y \in [x, x -q]} ~  D^3w(y,\tau+1) [ q,q,q]  \leq K(\tau)
\] for all $\tau \in [t,-2]$, all $q \in [- 1,1]^N$ and all $x$.

Appendix \ref{app:max_final_period} showed that $  \varphi(x+r,0)- \varphi(x,-1) \leq C_{u.b.}$ for all $x$ and $r$ where  $C_{u.b.} = 2$.   Also, Appendix \ref{app:psi_time_deriv_bound} proved that  $-\psi_{tt}(x,  \tau)  \leq \frac{K^{u.b.}_2}{ |\tau|^{\frac{3}{2}}}$ for all $x$ and $\tau \leq - 1$ where $K^{u.b.}_2 = \frac{N-1} {N} \sqrt{   \frac{\kappa} {e^3 \pi  }}$. Finally, below we show that $| D^3 \psi [q,q,q](x, t) |  \leq   \frac{1} {|t|} K^{u.b.}_3$ for all $q \in[-1,1]^N$ where $K^{u.b.}_3 = O\left (\frac{N^2}{ \kappa}\right)$. Therefore, for $\kappa =\kappa_m $, $E^{\varphi}_{u.b.}(t) = O(N \log |t|)$. 

In the remaining part of this Appendix, we show that 
\[
| D^3\psi [q,q,q]| \leq \frac {1}{2\kappa |\tau|}\sqrt {\frac { 2}{ e \pi }}\left (\frac{7}{2}N^2-8N+5\log N+\frac{3}{2}\right)
\]
uniformly over all $q\in[-1,1]^N$  and $x\in \mathbb R^N$. By Appendix \ref{app:Derivatives of the max-based potential}, 
\begin{align*}
\begin{cases}
|\partial_{(1)(1)(1)} \psi| \leq \frac {1}{-2 \kappa (t+1)}\sqrt {\frac { 2}{ e \pi }}(1-\frac{1}{N})\\
|\partial_{(i)(i)(i)} \psi| \leq \frac {1}{-2 \kappa (t+1)}\sqrt {\frac { 2}{ e \pi }}\frac {(i-1)^2}{i}& \text {if}~  i>1\\
|\partial_{(i)(j)(j)} \psi| \leq \frac {1}{-2 \kappa (t+1)}\sqrt { \frac { 2}{ e \pi }}(1-\frac{1}{N})& \text {if}~  i<j\\
|\partial_{(i)(j)(j)} \psi |\leq \frac {1}{-2 \kappa (t+1)}\sqrt { \frac { 2}{ e \pi }}\frac {1}{i}& \text {if}~  i>j\\
|\partial_{(i)(j)(k)} \psi| \leq \frac {1}{-2 \kappa (t+1)}\sqrt { \frac { 2}{ e \pi }}\frac {1}{k}& \text {if}~  i<j<k
\end{cases}
\end{align*}
Notice that for any $i,j,k$, $\partial_{(i)(j)(k)}\psi$ only depends on $\max(i,j,k)$, for $q\in[-1,1]^N$ we have 
\begin{align*}
|D^3\psi( x, t+1)& [q,q,q]|\leq\sum_{i=1}^N|\partial_{(i)(i)(i)}\psi(x,t+1)|+\sum_{i=2}^N(\sum_{j=1}^{i-1}q_j^2)|\partial_{(i)(j)(j)}\psi(x,t+1)|\\
+&\sum_{j=2}^N|\sum_{i=1}^{j-1}q_i||\partial_{(i)(j)(j)}\psi(x,t+1)|+6\sum_{k=3}^N(\sum_{i=1}^{k-1}q_i)(\sum_{j=1}^{k-1}q_j)|\partial_{(i)(j)(k)}\psi(x,t+1)|\\
\leq&\frac {1}{-2 \kappa (t+1)}\sqrt {\frac { 2}{ e \pi }}\left(N-1+\sum_{i=2}^N(1-\frac{1}{i})+\sum_{j=2}^N(j-1)(1-\frac{1}{N})+6\sum_{k=3}^N\frac{(k-1)^2}{k}\right)\\
\leq& \frac {1}{-2 \kappa (t+1)}\sqrt {\frac { 2}{ e \pi }}\left(\frac{7}{2}N^2-8N+5\log N+\frac{3}{2}\right)
\end{align*}

\section{Numerical Computation of Bounds} 
\label{app:numerics}
In this Appendix, we describe numerical computation of bounds obtained by $a^s$, $a^h$ and $p^h$ that are presented in Figures \ref{fig:l} and \ref{fig:2}.

The lower bound attained by $a^s$, as rescaled for our losses, is
\[
\sum_{j=0}^{M-1}\mathbb E \left [ \Big| \sum_{\substack{1 \leq t \leq |T| \\{t \mod M=j}}} Z_t  \Big| \right] 
\]
where $M = \floor{\log_2 N}$ and each $Z_t$ is an independent Radamacher random variable. As noted in the same reference $ \mathbb E \left [ | \sum_{t \in[n]} Z_t|\right] \leq  \sqrt {\frac {2n}{\pi}} \exp \left ( \frac{1}{12n} - \frac{2}{6n+1} \right ) $, and we will set the expected distance of each random walk to be equal to its upper bound for comparison purposes. 

The bounds obtained by using $a^h$ and $p^h$ are expressed in terms of $\mathbb E_G \max G_i$ where $G$ is a standard N-dimensional Gaussian.  Note that $\mathbb E_G \max G_i = \int_{-\infty}^\infty t \frac {d}{dt} (\Phi (t)^N) dt$ where $\Phi$ is the c.d.f. of the Gaussian random variable $N(0,1)$. Therefore, for comparison purposes, we evaluate the expectation of the maximum of Gaussian using numerical integration (\textit{integral} function in MATLAB).

\end{document}